\def\eqref#1{equation~\ref{#1}}
\def\1{\bm{1}}
\DeclareMathAlphabet{\mathsfit}{\encodingdefault}{\sfdefault}{m}{sl}
\SetMathAlphabet{\mathsfit}{bold}{\encodingdefault}{\sfdefault}{bx}{n}
\newcommand{\E}{\mathbb{E}}
\DeclareMathOperator*{\argmax}{arg\,max}
\newcommand{\RR}{\mathbb{R}}
\newcommand{\poly}{\mathrm{poly}}
\newcommand{\conv}{\mathrm{conv}}
\newcommand{\approxerr}{\delta}
\def\cA{\mathcal{A}}
\def\cN{\mathcal{N}}
\def\SS{\mathbb{S}}
\def\NN{\mathbb{N}}
\newcommand{\norm}[1]{\left\|#1\right\|}
\newcommand{\simplex}{\triangle}
\newcommand{\abs}[1]{\left|#1\right|}
\newcommand{\expect}{\mathbb{E}}
\newcommand{\states}{\mathcal{S}}
\newcommand{\trans}{P}
\newcommand{\actions}{\mathcal{A}}
\newtheorem{thm}{Theorem}[section]
\newtheorem{lem}{Lemma}[section]
\newtheorem{asmp}{Assumption}[section]
\newtheorem{defn}{Definition}[section]
\newcommand{\gap}{\mathrm{gap}}
\newcommand{\gapmin}{\rho}
\newcommand{\mdp}{\mathcal{M}}
\newcommand{\srl}{\textsf{RL}}
\newcommand{\gm}{\textsf{Generative Model}}
\newcommand{\kt}{\textsf{Known Transition}}
\def\approxcorrect{\checkmark\kern-1.1ex\raisebox{.89ex}{$\times$}}
\newcommand{\spanner}{\Lambda}
\newcommand{\idx}{\textsf{INDEX-QUERY}}
\newcommand{\idxn}[1][]{\ifthenelse{\equal{#1}{}}{\mathsf{INDQ}_n}{\mathsf{INDQ}_{#1}}}
\newcommand{\reward}{\overline{r}}
\newcommand{\revise}[1]{\textcolor{blue}{#1}}
\newcommand{\simon}[1]{\textcolor{cyan}{[Simon: #1]}}
\newenvironment{itemize*}%
{\begin{itemize}[leftmargin=*,topsep=0pt]%
		\setlength{\itemsep}{0pt}%
		\setlength{\parskip}{0pt}}%
	{\end{itemize}}
\newenvironment{enumerate*}%
{\begin{enumerate}[leftmargin=*,topsep=0pt]%
		\setlength{\itemsep}{0pt}%
		\setlength{\parskip}{0pt}}%
{\end{enumerate}}
\title{Is a Good Representation Sufficient for Sample Efficient Reinforcement Learning?}
\author{Simon S. Du
	\\
	Institute for Advanced Study\\
	\texttt{ssdu@ias.edu}\\
	\And Sham M. Kakade \\
	University of Washington, Seattle\\
	\texttt{sham@cs.washington.edu}
	\And Ruosong Wang\\
	Carnegie Mellon University \\
	\texttt{ruosongw@andrew.cmu.edu}\\
	\And Lin F. Yang \\
	University of California, Los Angles\\
	\texttt{linyang@ee.ucla.edu}\\
}
\begin{document}

\maketitle
\begin{abstract}
	Modern deep learning methods provide effective means to learn good representations. However, is a good representation itself sufficient for sample efficient reinforcement learning? This question has largely been studied only with respect to (worst-case) approximation error, in the more classical approximate dynamic programming literature. With regards to the statistical viewpoint, this question is largely unexplored, and the extant body of literature mainly focuses on conditions which \emph{permit} sample efficient reinforcement learning with little understanding of what are \emph{necessary} conditions for efficient reinforcement learning.

This work shows that, from the statistical viewpoint, the situation is far subtler than suggested by the more traditional approximation viewpoint, where the requirements on the representation that suffice for sample efficient RL are even more stringent. Our main results provide sharp thresholds for reinforcement learning methods, showing that there are hard limitations on what constitutes good function approximation (in terms of the dimensionality of the representation), where we focus on natural representational conditions relevant to value-based, model-based, and policy-based learning. These lower bounds highlight that having a good (value-based, model-based, or policy-based) representation in and of itself is insufficient for efficient reinforcement learning, unless the quality of this approximation passes certain hard thresholds. Furthermore, our lower bounds also imply exponential separations on the sample complexity between 1) value-based learning with perfect representation and value-based learning with a good-but-not-perfect representation, 2) value-based learning and policy-based learning, 3) policy-based learning and supervised learning and 4) reinforcement learning and imitation learning.   

\end{abstract}

\section{Introduction}
\label{sec:intro}
Modern reinforcement learning (RL) problems are often challenging due to the huge state space.
To tackle this challenge, 
 function approximation schemes are often employed to provide a compact representation, so that reinforcement learning can generalize across states. 
A common paradigm is to first use a feature extractor to transform the raw input to features (a succinct representation) and then apply a linear predictor on top of the features.
Traditionally, the feature extractor is often handcrafted~\citep{sutton2018reinforcement}, while more modern methods often train a deep neural network to extract features.
The hope of this paradigm is that, if there exists a good low dimensional (linear) representation, then efficient reinforcement learning is possible. 

Empirically, combining various RL function approximation algorithms with neural networks for feature extraction has lead to tremendous successes on various tasks~\citep{mnih2015human,schulman2015trust,schulman2017proximal}.
A major problem, however, is that these methods often require a large amount of samples to learn a good policy.
For example, deep $Q$-network requires millions of samples to solve certain Atari games~\citep{mnih2015human}. 
Here, one may wonder if there are fundamental statistical limitations on such methods, and,
if so, under what conditions it would be possible to efficiently learn a good policy?

In the supervised learning context, it is well-known that empirical
risk minimization is a statistically efficient method when using a
low-complexity hypothesis space~\citep{shalev2014understanding}, e.g.
a hypothesis space with bounded VC dimension. 
For example, polynomial number of samples suffice for learning a
near-optimal $d$-dimensional linear classifier, even in the agnostic
setting\footnote{Here we only study the sample complexity and ignore
  the computational complexity.}.   
In contrast, in the more challenging RL setting, we seek to understand
if efficient learning is possible (say from a sample complexity
perspective) when we have access to an accurate (and compact)
parametric representation --- e.g. our policy class contains a
near-optimal policy or our hypothesis class accurately  approximates
the optimal value function. In particular,  this work focuses on the
following question: 
\begin{center}
	\textbf{Is a good representation sufficient for sample-efficient reinforcement learning? 
	}
\end{center}
This question has largely been studied only with respect to
approximation error in the more classical approximate dynamic
programming literature, where it is known that algorithms are stable to
certain worst-case approximation errors. With regards to sample
efficiency, this question is largely unexplored, where the extant body
of literature mainly focuses on conditions which are \emph{sufficient}
for efficient reinforcement learning though there is little
understanding of what are \emph{necessary} conditions for efficient
reinforcement learning.  In reinforcement learning, there is no direct
analogue of empirical risk minimization as in the supervised learning
context, and it is not evident what are the statistical limits of
learning based on properties of our underlying hypothesis class (which
may be value-based, policy-based, or model-based).

Many recent works have provided polynomial upper bounds under various sufficient conditions, and in what follows we list a few examples.
For value-based learning, the work of \cite{wen2013efficient} showed that for \emph{deterministic systems}\footnote{MDPs where both reward and transition are deterministic.}, if the optimal $Q$-function can be \emph{perfectly} predicted by linear functions of the given features, then the agent can learn the optimal policy exactly with polynomial number of samples.
Recent work \citep{jiang2017contextual} further showed that if certain complexity measure called \emph{Bellman rank} is bounded, then the agent can learn a near-optimal policy efficiently.
For policy-based learning, \citet{agarwal2019optimality} gave polynomial upper bounds which depend on a parameter that measures the difference between the initial distribution and the distribution induced by the optimal policy.

\textbf{Our Contributions.} 
This paper gives, perhaps surprisingly, strong \emph{negative} results to this question.
The main results are \emph{exponential lower bounds} in terms of planning horizon $H$ for value-based, model-based, and policy-based algorithms with given good representations\footnote{
	Our results can be easily extend to infinite horizon MDPs with discount factors by replacing the planning horizon $H$ with $\frac{1}{1-\gamma}$, where $\gamma$ is the discount factor.
	We omit the discussion on discount MDPs for simplicity.
}. 
Notably, the requirements on the representation that suffice for sample efficient RL are even more stringent
than the more traditional approximation viewpoint.
A comprehensive summary of previous upper bounds and our lower bounds is given in Table~\ref{tab:value_based}, and here we briefly summarize our hardness results.
\begin{enumerate}
	\item For value-based learning, we show even if $Q$-functions of all policies can be approximated by linear functions of the given representation with approximation error $\approxerr = \Omega\left(\sqrt{\frac{H}{d}}\right)$ where $d$ is the dimension of the representation and $H$ is the planning horizon, then the agent still needs to sample exponential number of trajectories to find a near-optimal policy. 
\item For model-based learning, we show even if the transition matrix and the reward function can be approximated by linear functions of the given representation with approximation error $\approxerr = \Omega\left(\sqrt{\frac{H}{d}}\right)$ (in $\ell_{\infty}$ sense),  the agent still needs to sample exponential number of trajectories to find a near-optimal policy. 
	\item We show even if optimal policy can be \emph{perfectly} predicted by a linear function of the given representation with a strictly positive margin, the agent still requires exponential number of trajectories to find a near-optimal policy. 
\end{enumerate}
These lower bounds hold even in deterministic systems and even if the agent knows the transition model.
Note these negative results apply to the case where the $Q$-function, the model, or the optimal policy can be predicted well by a linear function of the given representation.
Since the class of linear functions is a strict subset of many more complicated function classes, including neural networks in particular, our negative results imply lower bounds for these more complex function classes as well.
Our results highlight the following conceptual insights:
\begin{itemize}
\item The requirements on the representation that suffice for sample efficient RL are significantly more stringent
than the more traditional approximation viewpoint; our statistical lower bounds show that there are hard thresholds on the worst-case approximation quality of the representation which are not necessary from the approximation viewpoint.
\item Since our lower bounds apply even when the agent knows the transition model, the hardness is not due to the difficulty of exploration in the standard sense. The unknown reward function is sufficient to make the problem exponentially difficult.
\item Our lower bounds are not due to the agent's inability to perform efficient supervised learning, since our assumptions do admit polynomial sample complexity upper bounds if the data distribution is fixed.
\item Our lower bounds are not pathological in nature and suggest that
  these concerns may arise in practice.
  In a precise sense, almost all feature extractors induce a hard MDP
  instance in our construction (see Section~\ref{sec:proof_ideas}).
\end{itemize}
Instead, one interpretation is that the hardness is due to a distribution mismatch in the following sense: the agent does not know which distribution to use for minimizing a (supervised) learning error (see ~\cite{kakade2003sample} for discussion), and  even a known transition model is not information-theoretically sufficient to reduce the sample complexity.

Furthermore, our work implies several interesting exponential separations on the sample complexity between: 1) value-based learning with perfect representation and value-based learning with a good-but-not-perfect representation, 2) value-based learning and policy-based learning, 3) policy-based learning and supervised learning and 4) reinforcement learning and imitation learning. We provide more details in Section~\ref{sec:dis}. 


%


\section{Related Work}
\label{sec:rel}
\begin{table*}
	\centering
	\resizebox{\columnwidth}{!}{%
		\renewcommand{\arraystretch}{1.5}
		\begin{tabular}{ |c|c|c|c|}
			\hline
			Query Oracle & \srl & \gm & \kt \\
                  \hline
                      \multicolumn{4}{|l|}{Previous Upper Bounds} \\ 
                  \hline
			Exact linear $Q^*$ + DetMDP~\citep{wen2013efficient} & \checkmark & \checkmark&\checkmark \\
			\hline
			Exact linear $Q^*$ + 
			Bellman-Rank~\citep{jiang2017contextual} & \checkmark & \checkmark&\checkmark \\
			\hline
			Exact Linear $Q^*$ + Low Var + Gap~\citep{du2019provably}& \checkmark & \checkmark&\checkmark \\
			\hline
			Exact Linear $Q^*$ + Gap (Open Problem /
                  Theorem~\ref{thm:ub_q_star_gap_gm})& ? &
                                                           \checkmark&\checkmark \\   
                                                           \hline 
				Exact Linear $Q^\pi$ for all $\pi$ (Open Problem / Theorem~\ref{thm:ub_all_pi_gm}) & ? & \checkmark&\checkmark    \\         
\hline                  
			\makecell{
			Approx. Linear $Q^\pi$ for all $\pi$ +\\ Concentratability~\citep{munos2005error,antos2008learning}  } & \approxcorrect & \checkmark&\checkmark \\
			\hline
			\makecell{Approx. Linear $Q^\pi$ for all $\pi$ + \\
			Bounded Dist Mismatch
                  Coeff~\citep{kakade2002approximately}}&
                                                          \approxcorrect & \checkmark & \checkmark \\
                  \hline
                      \multicolumn{4}{|l|}{Lower Bounds (this work)} \\ \hline
			Approx Linear $Q^*$  (Theorem~\ref{thm:lb_q_all})  & $\times$ & $\times $&  $\times$\\
			\hline
			Approx Linear $Q^\pi$ for all $\pi$ (Theorem~\ref{thm:lb_q_all})& $\times $ & $\times $&  $\times$\\
		\hline
	$\ell_\infty$ Approx Linear MDP  (Theorem~\ref{thm:lb_transition}) & $\times$ & $\times $&  $\times$
\\
\hline
			Exact Linear $\pi^*$ +  Margin + Gap + DetMDP (Theorem~\ref{thm:lb_margin}) & $\times$ & $\times$ & $\times$\\
\hhline{|=|=|=|=|}
Exact Linear $Q^*$ (Open Problem) & ? & ? & ? \\
\hline
		\end{tabular}
	}
	\caption{
		Summary of theoretical results on reinforcement learning with linear function approximation.
		See Section~\ref{sec:rel} for discussion on this table.
		\srl, \gm, \kt~are defined in Section~\ref{sec:query_models}.
		Exact linear $Q^*$: Assumption~\ref{asmp:q_star_linear} with $\approxerr = 0$. 
		Approx linear $Q^*$: Assumption~\ref{asmp:q_star_linear} with $\approxerr = \Omega\left(\sqrt{\frac{H}{d}}\right)$.
		Exact linear $\pi^*$: Assumption~\ref{asmp:linear_policy_realizable}.
		Margin: Assumption~\ref{asmp:linear_policy_margin}.
		Exact Linear $Q^\pi$ for all $\pi$: Assumption~\ref{asmp:q_all_linear} with $\approxerr  = 0$.
		Approximate Linear $Q^\pi$ for all $\pi$: Assumption~\ref{asmp:q_all_linear} with $\approxerr  = \Omega\left( \sqrt{\frac{H}{d}}\right)$.
		DetMDP: deterministic system defined in Section~\ref{sec:mdp}.
		Bellman-rank: Definition 5 in \cite{jiang2017contextual}.
		Low Var: Assumption 1 in \cite{du2019qlearning}.
		Gap: Assumption~\ref{asmp:weak_gap}.
		Bounded Distribution Mismatch Coefficient: Definition 3.3 in \cite{agarwal2019optimality}.
		$\ell_\infty$ Approx Linear MDP: Assumption~\ref{asmp:linear_trans} with $\approxerr = \Omega\left(\sqrt{\frac{H}{d}}\right)$.
		\checkmark: there exists an algorithm with polynomial sample complexity to find a near-optimal policy.
		\approxcorrect: requires certain condition on the initial distribution.
		$\times$: exponential number of samples is required.
		?: open problem. 
		\label{tab:value_based}
	}
		\vspace{-0.5cm}
\end{table*}

A summary of previous upper bounds, together with lower bounds proved in this paper, is provided in
Table~\ref{tab:value_based}.
Some key assumptions are formally stated in Section~\ref{sec:pre} and~Section~\ref{sec:lower_bound}.
Our lower bounds highlight that classical complexity measures in supervised learning including small approximation error and margin, and standard assumptions in reinforcement learning including optimality gap and deterministic systems, are not enough for efficient RL with function approximation.
We need additional assumptions, e.g., ones used in previous upper bounds, for efficient RL.

\subsection{Previous Lower Bounds}
Existing exponential lower bounds, to our knowledge, construct \emph{unstructured} MDPs with an exponentially large state space and reduce a bandit problem with exponentially many arms to an MDP~\citep{krishnamurthy2016pac,sun2017deeply}.
However, these lower bounds cannot apply to MDPs whose transition models, value functions, or policies can be approximated with some natural function classes, e.g., linear functions, neural networks, etc.
The current paper gives the first set of lower bounds for RL with linear function approximation (and thus also hold for super classes of linear functions such as neural networks).

\subsection{Previous Upper Bounds}

We divide previous algorithms (with provable guarantees)
into three classes: those that utilize uncertainty-based
bonuses (e.g. UCB variants or Thompson sampling variants); approximate
dynamic programming variants (which often make assumptions with respect to concentrability coefficients); 
and direct policy search-based methods (such as conserve policy iteration (CPI, see \cite{kakade2003sample}) or policy gradient methods, which make assumptions with
respect to distribution mismatch coefficients). The first class of
methods include those based on witness rank, Belman rank, and the
Eluder dimension, while the latter two classes of
algorithms make assumptions either on
\emph{concentrability coefficients} or on \emph{distribution mismatch
  coefficients} (see \cite{agarwal2019optimality,Scherrer:API} for discussions).

\textbf{Uncertainty bonus-based algorithms.}
Now we discuss existing theoretical results on value-based learning with function approximation.
The most relevant work is \cite{wen2013efficient} which showed in deterministic systems, if the optimal $Q$-function is within a pre-specified function class which has bounded Eluder dimension, for which the class of linear functions is a special case, then the agent can learn the optimal policy using polynomial number of samples.
This result has recently been generalized by \cite{du2019provably} which can deal with stochastic reward and low variance transition but requires strictly positive optimality gap.
As we listed in Table~\ref{tab:value_based}, it is an open problem whether the condition that the optimal $Q$-function is linear itself is sufficient for efficient RL.


\cite{li2011knows} proposed a $Q$-learning algorithm which requires the Know-What-It-Knows oracle.
However, it is in general unknown how to implement such oracle in practice. 
\cite{jiang2017contextual} proposed the concept of Bellman Rank to characterize the sample complexity of value-based learning methods and gave an algorithm that has polynomial sample complexity in terms of the Bellman Rank, though the proposed algorithm is not computationally efficient.
Bellman rank is bounded for a wide range of problems, including MDP with small number of hidden states,  linear MDP, LQR, etc.
Later work gave computationally efficient algorithms for certain special cases~\citep{dann2018polynomial,du2019provably,yang2019sample,jin2019provably}.
Recently, Witness rank, a generalization of Bellman rank to model-based methods, is studied in~\citet{sun2019model}.


\textbf{Approximate dynamic programming-based algorithms.}
We now discuss approximate dynamic programming-based results
characterized in terms of the concentrability coefficient.  While classical
approximate dynamic programming results typically require $\ell_\infty$-bounded errors, the notion of
\emph{concentrability} (originally due to \citep{munos2005error}) permits
sharper bounds in terms of average-case function approximation error,
provided that the concentrability coefficient is bounded (e.g. see
~\cite{munos2005error, szepesvari2005finite, antos2008learning, geist2019theory}).
Under the assumption that this problem-dependent parameter is bounded,
\cite{munos2005error, szepesvari2005finite} and
\cite{antos2008learning} proved sample complexity and error
bounds for approximate dynamic programming methods when there is a
data collection policy (under which value-function fitting occurs)
that induces a finite concentrability
coefficient. The assumption that the concentrability
coefficient is finite is in fact quite limiting.
See \citet{chen2019information} which provides a more detailed
discussion on this  quantity.

\textbf{Direct policy search-based algorithms.}
Stronger guarantees over approximate dynamic programming-based algrithm can
be obtained with direct policy search-based methods, where instead of
having a bounded concentrability coefficient, one only needs to have a
bounded \emph{distribution mismatch coefficient}. The latter assumption
requires the agent to have access to a ``good'' initial state
distribution (e.g. a measure which has coverage over where an optimal
policy tends to visit); note that this assumption does not make
restrictions over the class of MDPs. There are two classes of
algorithms that fall into this category. First, there is
Conservative Policy Iteration~\citep{kakade2002approximately},
along with Policy Search by Dynamic Programming
(PSDP)~\citep{NIPS2003_2378}, and other boosting-style of policy
search-based methods~\cite{scherrer2014local, Scherrer:API}, which
have guarantees in terms of bounded distribution mismatch
ratio. Second, more
recently, \cite{agarwal2019optimality} showed that 
policy gradient styles of algorithms also have comparable guarantees.


\textbf{Recent extensions.}
Subsequent to this work, the work by \cite{van2019comments} and
\citet{lattimore2019learning} made notable contributions to the misspecified linear bandit problem.
In particular, both papers found that Theorem~\ref{thm:lb_q_all} in
our paper can be extended to the misspecified linear bandit problem
and gave upper bounds for this problem showing that our lower bound
has tight dependency on $\delta$ and $d$. 
\citet{lattimore2019learning} further gave an upper bound for the
setting where the $Q$-functions of all policies can be approximated by
linear functions with small approximation errors and the agent can
interact with the environment using a generative model. 
This upper bound also demonstrates that our lower bound has tight dependency on $\delta$ and $d$.

\section{Preliminaries}
\label{sec:pre}
Throughout this paper, for a given integer $H$, we use $[H]$ to denote the set $\{0, 1, \ldots, H - 1\}$.
\subsection{Episodic Reinforcement Learning}
\label{sec:mdp}
Let $\mdp =\left(\states, \actions, H,\trans,R \right)$ be an Markov Decision Process (MDP) where $\states$ is the state space, $\actions$ is the action space whose size is bounded by a constant, $H \in \mathbb{Z}_+$ is the planning horizon, $\trans: \states \times \actions \rightarrow \triangle\left(\states\right)$ is the transition function which takes a state-action pair and returns a distribution over states and $R : \states \times \actions \rightarrow \triangle \left( \mathbb{R} \right)$ is the reward distribution. 
Without loss of generality, we assume a fixed initial state $s_0$\footnote{Some papers assume the initial	state is sampled from a distribution $P_1$.
Note this is equivalent to assuming a fixed initial state $s_0$, by setting $\trans(s_0,a) = P_1$ for all $a \in \actions$ and now our state $s_1$ is equivalent to the initial state in their assumption.}.
A policy $\pi: \states \rightarrow \simplex(\actions)$ prescribes a distribution over actions for each state.
The policy $\pi$ induces a (random) trajectory $s_0,a_0,r_0,s_1,a_1,r_1,\ldots,s_{H - 1},a_{H - 1},r_{H - 1}$ 
where $a_0 \sim \pi(s_0)$, $r_0 \sim R(s_0,a_0)$, $s_1 \sim \trans(s_0,a_0)$, $a_1 \sim \pi(s_1)$, etc.
To streamline our analysis, for each $h \in [H]$, we use $\states_h \subseteq \states$ to denote the set of states at level $h$, and we assume $\states_h$ do not intersect with each other.
We also assume $\sum_{h = 0}^{H - 1}r_h \in [0, 1]$ almost surely.
Our goal is to find a policy $\pi$ that maximizes the expected total reward $\expect \left[\sum_{h=0}^{H - 1} r_h \mid \pi\right]$. 
We use $\pi^*$ to denote the optimal policy.
We say a policy $\pi$ is $\varepsilon$-optimal if $\expect \left[\sum_{h=0}^{H - 1} r_h\mid \pi\right] \ge \expect \left[\sum_{h=0}^{H - 1} r_h\mid \pi^*\right] - \varepsilon$.

In this paper we prove lower bounds for deterministic systems, i.e., MDPs with deterministic transition $P$, deterministic reward $R$.
In this setting, $P$ and $R$ can be regarded as functions instead of distributions. 
Since deterministic systems are special cases of general stochastic MDPs, lower bounds proved in this paper still hold for more general MDPs.


\subsection{$Q$-function and Optimality Gap}
An important concept in RL is the $Q$-function.
Given a policy $\pi$, a level $h \in [H]$ and a state-action pair $(s,a) \in \states_h \times \actions$, the $Q$-function is defined as $Q_h^\pi(s,a) = \expect\left[\sum_{h' = h}^{H - 1}r_{h'}\mid s_h =s, a_h = a, \pi\right]$.
For simplicity, we denote $Q_h^*(s,a) = Q_h^{\pi^*}(s,a)$.
In addition to these definitions, we list below an important assumption, the optimality gap assumption, which is widely used in reinforcement learning and bandit literature.
To state the assumption, we first define the function $\gap: \states \times \actions \rightarrow \mathbb{R}$ as $\gap(s,a) = \argmax_{a' \in \actions}Q^*(s,a') - Q^*(s,a)$.
Now we formally state the assumption.
\begin{asmp}[Optimality Gap]
	\label{asmp:weak_gap}
	There exists $\gapmin > 0$ such that $\gapmin \le \gap(s, a)$ for all $(s, a) \in \states \times \actions$ with $\gap (s,a) >0$.
\end{asmp}
	Here, $\gapmin$ is the smallest reward-to-go difference between the best set of actions and the rest.
	Recently, \citet{du2019qlearning} gave a provably efficient $Q$-learning algorithm based on this assumption and \citet{simchowitz2019non} showed that with this condition, the agent only incurs logarithmic regret in the tabular setting. 


\subsection{Query Models}
\label{sec:query_models}
Here we discuss three possible query oracles interacting with the MDP.
\begin{itemize}
	\item \srl: The most basic and weakest query oracle for MDP is the standard reinforcement learning query oracle where the agent can only interact with the MDP by choosing actions and observe the next state and the reward.
	\item\gm: A stronger query model assumes the agent can transit to any state~\citep{kearns2002near, kakade2003sample, sidford2018near}. 
	This query model is available in  certain robotic applications where one can control the robot to reach the target state.
	\item \kt: The strongest query model considered is that the agent can not only transit to any state, but also knows the whole transition function.
	In this model, only the reward is unknown.
\end{itemize}
In this paper, we will prove lower bounds for the strongest \kt~query oracle.
Therefore, our lower bounds also apply to \srl~and \gm~query oracles.



%


\section{Main Results}
\label{sec:lower_bound}
In this section we formally present our lower bounds.
We also discuss proof ideas in Section~\ref{sec:proof_ideas}.

\subsection{Lower Bound for Value-based Learning}
\label{sec:q_function_gap}

We first present our lower bound for value-based learning.
A common assumption is that the $Q$-function can be predicted well by a linear function of the given features (representation)~\citep{bertsekas1996neuro}.
%
Formally, the agent is given a feature extractor $\phi: \states \times \actions \to \mathbb{R}^d$ which can be hand-crafted or a pre-trained neural network that transforms a state-action pair to a $d$-dimensional embedding.
The following assumption states that the given feature extractor can be used to predict the $Q$-function with approximation error at most $\approxerr$ using a linear function.
\begin{asmp}
	\label{asmp:q_star_linear}
	There exists $\approxerr > 0$ and $\theta_0, \theta_1, \ldots, \theta_{H - 1} \in \mathbb{R}^d$ such that for any $h \in [H]$ and any $\left(s,a\right) \in \states_h \times \actions$, 
	$\abs{Q_h^*\left(s,a\right) - \langle \theta_h, \phi\left(s,a\right) \rangle} \le \approxerr.$
\end{asmp}
Here $\approxerr$ is the approximation error, which indicates the quality of the representation.
If $\approxerr = 0$, then $Q$-function can be perfectly predicted by a linear function of $\phi\left(\cdot,\cdot\right)$.
In general, $\approxerr$ becomes smaller as we increase the dimension of $\phi$, since larger dimension usually has more expressive power.
When the feature extractor is strong enough, previous papers~\citep{chen2019information,farahmand2011regularization} assume that linear functions of $\phi$ can approximate the $Q$-function of \emph{any} policy.
\begin{asmp}[Policy Completeness]
	\label{asmp:q_all_linear}
	There exists  $\approxerr > 0$, such that for any $h \in [H]$ and any policy $\pi $, there exists $\theta_h^\pi \in \mathbb{R}^d$ such that for any $\left(s,a\right) \in \states_h \times \actions$, 
	$	\abs{Q_h^\pi\left(s,a\right) - \langle \theta_h, \phi\left(s,a\right) \rangle} \le \approxerr.$
\end{asmp}
In the theoretical reinforcement learning literature, Assumption~\ref{asmp:q_all_linear} is often called the (approximate) policy completeness assumption.
This assumption is crucial in proving polynomial sample complexity guarantee for value iteration type of algorithms~\citep{chen2019information,farahmand2011regularization}.

The following theorem shows when $\approxerr = \Omega\left(\sqrt{\frac{H}{d}}\right)$, the agent needs to sample exponential number of trajectories to find a near-optimal policy. 
\begin{thm}[Exponential Lower Bound for Value-based Learning]
	\label{thm:lb_q_all}
	There exists a family of MDPs with $|\actions| = 2$ and a feature extractor $\phi$ that satisfy Assumption~\ref{asmp:q_all_linear}, such that any algorithm that returns a $1/2$-optimal policy with probability $0.9$ needs to sample $\Omega\left( \min \{ |\mathcal{S}|, 2^H, \exp(d \approxerr^2 / 16)\}\right)$ trajectories.
\end{thm}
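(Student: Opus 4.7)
The proof strategy is a classic needle-in-a-haystack information-theoretic reduction: I would build a family of deterministic MDPs indexed by a ``secret'' and show that any algorithm achieving $1/2$-optimality must essentially identify the secret, requiring a number of trajectories exponential in $\min\{d\delta^{2},\,\log|\mathcal{S}|,\,H\}$. The first ingredient is a near-orthogonal packing: using random sign vectors on $\{-1/\sqrt{d},+1/\sqrt{d}\}^{d}$ combined with Hoeffding's inequality and a union bound over pairs, one obtains $N$ unit vectors $v_{1},\ldots,v_{N}\in\mathbb{R}^{d}$ with $|\langle v_{i},v_{j}\rangle|\le\delta$ for all $i\ne j$, for any $N\le\exp(d\delta^{2}/16)$.

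Next I would construct the hard family on a binary tree with branching factor $2$ and depth $H$, pruned to use at most $|\mathcal{S}|$ states while retaining $N^{\star}=\Theta(\min\{|\mathcal{S}|,\,2^{H},\,\exp(d\delta^{2}/16)\})$ leaves that serve as candidate secrets. For each secret $i^{\star}\in[N^{\star}]$, the MDP $M_{i^{\star}}$ gives reward $1$ upon reaching leaf $i^{\star}$ and $0$ elsewhere, with transitions deterministic and shared across the family. The single, secret-independent feature map $\phi(s,a)$ assigns a distinct packing vector to every state-action pair at every level, which is feasible since each level contains $O(N^{\star})$ such pairs. To verify Assumption~\ref{asmp:q_all_linear} in $M_{i^{\star}}$, note that since the only nonzero reward is at leaf $i^{\star}$, the value of any policy $\pi$ satisfies $Q^{\pi}_{h}(s,a) = p_{h}^{\pi}\cdot\indict\bigl[(s,a)=(s_{h}^{\star},a_{h}^{\star})\bigr]$, where $(s_{h}^{\star},a_{h}^{\star})$ is the unique state-action on the path to $i^{\star}$ at level $h$ and $p_{h}^{\pi}\in[0,1]$ is the probability that $\pi$, started at the successor of $(s_{h}^{\star},a_{h}^{\star})$, ultimately reaches $i^{\star}$. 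Choosing $\theta_{h}^{\pi}=p_{h}^{\pi}\,\phi(s_{h}^{\star},a_{h}^{\star})$ then yields $\langle\theta_{h}^{\pi},\phi(s,a)\rangle=Q_{h}^{\pi}(s,a)$ exactly on the path and $|\langle\theta_{h}^{\pi},\phi(s,a)\rangle|\le\delta$ off the path, which gives the $\delta$-approximation.

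For the lower bound itself I would invoke a Le~Cam style argument. Because transitions are shared and all rewards vanish except at the single secret leaf, the distribution over any $T$ trajectories (under any fixed, possibly adaptive algorithm using even the \kt{} oracle) is identical across every pair of MDPs $M_{i^{\star}}, M_{i'}$ whose secret leaves were not visited by those trajectories. Since each trajectory reaches exactly one leaf, $T$ trajectories rule out at most $T$ candidate secrets, so under a uniform prior on $i^{\star}\in[N^{\star}]$ the output policy visits the true $i^{\star}$ with probability at most $O(T/N^{\star})$ and hence cannot be $1/2$-optimal with probability $0.9$ unless $T=\Omega(N^{\star})$. The principal technical obstacle is verifying Assumption~\ref{asmp:q_all_linear} uniformly over \emph{all} (possibly stochastic) policies while keeping $\phi$ independent of the secret; the crux is the structural observation that for every fixed $i^{\star}$, the function $Q_{h}^{\pi}$ is supported on a single state-action per level, which is precisely the structure that nearly-orthogonal features can represent with error at most $\delta$.
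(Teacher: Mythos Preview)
Your proposal is correct and follows essentially the same approach as the paper: a binary-tree needle-in-a-haystack instance, near-orthogonal feature vectors so that the single nonzero $Q$-value per level is $\delta$-approximable linearly, and an information-theoretic argument that finding the rewarding leaf requires $\Omega(N^\star)$ trajectories. The differences are cosmetic: the paper obtains the packing via random unit vectors (a Johnson--Lindenstrauss argument) rather than Rademacher vectors, and instead of assigning a distinct packing vector to every state-action pair it assigns one $d/2$-dimensional vector per \emph{state} and places the two actions in orthogonal halves of $\mathbb{R}^{d}$ (so that $\langle\theta_h^\pi,\phi(s,a)\rangle=0$ exactly when $a\neq a_h^\star$); the lower bound itself is phrased as a reduction from an explicit \textsc{Index-Query} problem plus Yao's minimax principle rather than a direct Le~Cam argument. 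Your observation that $Q_h^\pi$ is supported on a single $(s,a)$ pair per level with coefficient $p_h^\pi\in[0,1]$ is exactly the structural fact the paper exploits when it sets $\theta_h^\pi=Q^\pi(s_h^\star,a_h^\star)\,\phi(s_h^\star,a_h^\star)$.
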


Note this lower bound also applies to MDPs that satisfy Assumption~\ref{asmp:q_star_linear}, since Assumption~\ref{asmp:q_all_linear} is strictly stronger.
We would like to emphasize that since linear functions is a subclass of more complicated function classes, e.g., neural networks, our lower bound also holds for these function classes.
Moreover, in many scenarios, the feature extractor $\phi$ is the last layer of a neural network. Modern neural networks are often over-parameterized, which makes $d$ large. In this case, $d$ is much larger than $H$.
Thus, our lower bound holds even if the representation has small approximation error. 
Furthermore, the assumption that $|\actions| = 2$ is only for simplicity. Our lower bound can be easily generalized to the case that $|\actions| > 2$, in which case the sample complexity lower bound is $\Omega\left( \min \{ |\mathcal{S}|, |\actions|^H, \exp(d \approxerr^2 / 16)\}\right)$.

\subsection{Lower Bound for Model-based Learning}\label{sec:lb_model}
Here we present our lower bound for model-based learning.
Recently, \citet{yang2019sample} proposed the linear transition assumption which was later studied in~\citet{yang2019reinforcement, jin2019provably}.
Again, we assume the agent is given a feature extractor $\phi: \states \times \actions \to \mathbb{R}^d$, and now we state the assumption formally as follow.
\begin{asmp}[Approximate Linear MDP]
	\label{asmp:linear_trans}
	There exists $\approxerr > 0$, $\beta_0, \beta_1, \ldots, \beta_{H - 1} \in \mathbb{R}^d$ and $\psi : \states \to \mathbb{R}^d$ such that for any $h \in [H - 1]$, $\left(s,a\right) \in \states_h \times \actions$ and $s' \in \states_{h + 1}$, 
	$\abs{\trans\left(s' \mid s,a\right) - \langle \psi(s'), \phi\left(s,a\right) \rangle} \le \approxerr$ and $\abs{\E[R(s,a)] - \left\langle\beta_h, \phi(s,a)\right\rangle} \le \approxerr$.
\end{asmp}
It has been shown in \citet{yang2019sample, yang2019reinforcement, jin2019provably} if $\| \trans\left( \cdot \mid s, a\right) - \langle \psi(\cdot), \phi\left(s,a\right) \rangle \|_1$ is bounded, then the problem admits an algorithm with polynomial sample complexity.
Now we show that when $\approxerr = \Omega\left(\sqrt{\frac{H}{d}}\right)$ in Assumption~\ref{asmp:linear_trans}, the agent needs exponential number of samples to find a near-optimal policy.
\begin{thm}[Exponential Lower Bound for Linear Transition Model]
	\label{thm:lb_transition}
		There exists a family of MDPs with $|\actions| = 2$ and a feature extractor $\phi$ that satisfy Assumption~\ref{asmp:linear_trans}, such that any algorithm that returns a $1/2$-optimal policy with probability $0.9$ needs to sample $\Omega\left( \min \{ |\mathcal{S}|, 2^H, \exp(d \approxerr^2 / 16)\}\right)$ trajectories.
\end{thm}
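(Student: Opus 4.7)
My plan is to adapt the needle-in-a-haystack construction used for Theorem~\ref{thm:lb_q_all} and, rather than only verifying approximate linearity of $Q^\pi$, directly exhibit one $d$-dimensional feature map $\phi$ that approximates both the transition kernel and the reward within $\approxerr$ in $\ell_\infty$ sense. All MDPs in the family will share a common deterministic binary-tree skeleton with horizon $H$ and $|\actions|=2$: a state at level $h$ is a binary string $s \in \{0,1\}^h$, and action $a$ deterministically transitions to $s\|a$. The reward is identically zero except that the hidden MDP $M_\tau$, indexed by some $\tau \in \mathcal{I} \subseteq \{0,1\}^H$, issues reward $1$ at the single leaf-action pair encoded by $\tau$. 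The index set is chosen so that $N := |\mathcal{I}| = \Omega(\min\{|\states|, 2^H, \exp(d\approxerr^2/16)\})$, which is the quantity we will lower bound.

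Next I would construct the features by near-orthogonal packing. A standard probabilistic / Johnson--Lindenstrauss argument produces a family of unit vectors $\{u_s\}_{s \in \{0,1\}^{\le H}} \subset \mathbb{R}^d$ with $|\langle u_s, u_{s'}\rangle| \le \approxerr$ whenever $s \ne s'$, and the maximum size of such a packing is precisely $\exp(\Omega(d\approxerr^2))$, which is what controls the third term in the $\min$. Setting
\begin{align*}
\phi(s,a) = u_{s\|a}, \qquad \psi(s') = u_{s'}, \qquad \beta_h = 0 \ \text{for}\ h < H-1, \qquad \beta_{H-1} = u_\tau,
\end{align*}
near-orthogonality immediately yields $|\langle \psi(s'),\phi(s,a)\rangle - \mathbf{1}[s' = s\|a]| \le \approxerr$ and $|\langle \beta_{H-1},\phi(s,a)\rangle - R(s,a)| \le \approxerr$, so every $M_\tau$ satisfies Assumption~\ref{asmp:linear_trans} with error $\approxerr$. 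Note the same feature map $\phi$ works for every $M_\tau$; only $\beta_{H-1}$ depends on the hidden parameter.

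For the lower bound itself I would argue indistinguishability under the \kt{} oracle: the transition structure is known and identical across the whole family, so the only channel through which any (possibly adaptive) algorithm can gain information about $\tau$ is observing a nonzero reward, which occurs only when the agent's trajectory terminates at the hidden leaf with the hidden action. A standard Le Cam / Yao minimax argument with a uniform prior on $\tau \in \mathcal{I}$ then shows that returning a $1/2$-optimal policy with success probability $0.9$ essentially requires identifying $\tau$ (every policy that does not target the needle has expected return $0$), which forces $\Omega(N)$ queries.

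The principal obstacle relative to Theorem~\ref{thm:lb_q_all} is that the transition kernel of the binary tree has $\Omega(2^H)$ nonzero entries and thus cannot be represented exactly in $d \ll 2^H$ dimensions. The trick is that the very same near-orthogonal packing that encodes the hidden reward vector $u_\tau$ simultaneously encodes the deterministic transitions via the inner products $\langle u_{s'}, u_{s\|a}\rangle$. Making this double duty work while properly tracking the three regimes (state-bounded, horizon-bounded, dimension-bounded) so that the $\min$ in the sample-complexity bound is genuinely achieved is where the bookkeeping lives; once the packing is in place the verification of Assumption~\ref{asmp:linear_trans} is immediate and the information-theoretic step is essentially identical to the one used for $Q^*$-realizability.
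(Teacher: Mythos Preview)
Your proposal is correct and is essentially the paper's own proof. The paper reuses the binary-tree construction of Theorem~\ref{thm:lb_q_all} verbatim and only adds the verification of Assumption~\ref{asmp:linear_trans}: it sets $\psi(s')=\phi(s,a)$ for the unique $(s,a)$ that transitions to $s'$, and $\beta_{H-2}$ equal to the feature of the single rewarding pair, which is exactly your choice $\phi(s,a)=u_{s\|a}$, $\psi(s')=u_{s'}$, $\beta=u_\tau$ written without the paper's block form $[p_i;0]$ versus $[0;p_i]$. The JL packing and the $\idx$ reduction are identical.
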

Again, our lower bound can be easily generalized to the case that $|\actions| > 2$.

We do note that an $\ell_{\infty}$ approximation for a
transition matrix may be a weak condition.
Under the stronger condition that the transition matrix can be approximated well under the total variational distance, there exists polynomial sample complexity upper bounds that can tolerate approximation errors~\citep{yang2019sample,yang2019reinforcement,jin2019provably}.

\subsection{Lower Bound for Policy-based Learning}
Next we present our lower bound for policy-based learning.
This class of methods use function approximation on the policy and use optimization techniques, e.g., policy gradient, to find the optimal policy.
In this paper, we focus on linear policies on top of a given representation.
A linear policy $\pi$ is a policy of the form $\pi(s_h) =\arg\max_{a \in \actions} \left\langle \theta_h, \phi(s_h,a)\right\rangle$ where $s_h \in \states_h$, $\phi\left(\cdot,\cdot\right)$ is a given feature extractor and $\theta_h \in \mathbb{R}^d$ is the linear coefficient.
Note that applying policy gradient on softmax parameterization of the policy is indeed trying to find the optimal policy among linear policies.

Similar to value-based  learning, a natural assumption for policy-based learning is that the optimal policy is realizable\footnote{
	Unlike value-based learning, it is hard to define completeness on the policy-based learning with function approximation, since not all policy has the $\argmax$ form.
}, i.e., the optimal policy is linear. 

\begin{asmp}
	\label{asmp:linear_policy_realizable}
	For any $h \in [H]$, there exists $\theta_h \in \mathbb{R}^d$ that satisfies for any $s \in \states_h$, we have $
	\pi^*\left(s\right) \in \argmax_{a} \left \langle\theta_h,\phi\left(s,a\right)\right\rangle.
	$
\end{asmp}
Here we discuss another assumption.
For learning a linear classifier in the supervised learning setting, one can reduce the sample complexity significantly if the optimal linear classifier has a margin.
\begin{asmp}
	\label{asmp:linear_policy_margin}
	We assume $\phi\left(s,a\right) \in \mathbb{R}^{d}$ satisfies $\norm{\phi(s,a)}_2=1$ for any $(s,a) \in \states \times \actions$.
	For any $h \in [H]$, there exists $\theta_h \in \mathbb{R}^d$ with $\norm{\theta_h}_2=1$ and $\triangle> 0$  such that for any $s \in \states_h$, there is a {\em unique} optimal action $\pi^*(s)$, and for any $a \neq \pi^*(s)$,
	$\left\langle\theta_h, \phi\left(s,\pi^*(s)\right)\right\rangle - \left\langle\theta_h , \phi\left(s,a\right)\right\rangle \ge \triangle$.
\end{asmp}
Here we restrict the linear coefficients and features to have unit norm for normalization.
Note that Assumption~\ref{asmp:linear_policy_margin} is strictly stronger than Assumption~\ref{asmp:linear_policy_realizable}.
Now we present our result for linear policy.
\begin{thm}[Exponential Lower Bound for Policy-based Learning]
	\label{thm:lb_margin}
There exists an absolute constant $\triangle_0$, such that for any $\triangle \le \triangle_0$, 
there exists a family of MDPs with $|\actions| = 2$ and a feature extractor $\phi$ that satisfy Assumption~\ref{asmp:weak_gap} with $\rho = \frac{1}{2\min\{H, d\}}$ and Assumption~\ref{asmp:linear_policy_margin}, such that any algorithm that returns a $1/4$-optimal policy with probability at least $0.9$ needs to sample $\Omega\left(\min\{2^{H}, 2^d\}\right)$ trajectories.
\end{thm}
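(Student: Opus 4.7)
The plan is to prove Theorem~\ref{thm:lb_margin} by constructing a combination-lock family of MDPs that simultaneously satisfies Assumption~\ref{asmp:weak_gap} and Assumption~\ref{asmp:linear_policy_margin}, and then showing an information-theoretic hardness result. Let $n = \min\{H, d\}$ and index the family by $v \in \{-1,+1\}^n$, so $|V| = 2^n = \min\{2^H, 2^d\}$. All $M_v$ share the same binary-tree state space of depth $n$ (padded by $H-n$ trivial levels) and the same transitions; they differ only in which leaf of the tree carries the terminal bonus. Every non-terminal state also earns a small $v$-independent reward of order $\rho = 1/(2n)$ whose sole role is to single out a canonical ``default'' optimal action (say $+1$) at off-path states, while the terminal bonus $1 - H\rho$ is placed at the leaf whose root-to-leaf action sequence equals $v$. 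This makes $\pi_v^*(s_h) = v_h$ at the unique on-path state of level $h$ and the default action elsewhere, and a direct case analysis verifies $\sum r_h \in [0,1]$ together with Assumption~\ref{asmp:weak_gap} with the required $\rho = 1/(2n)$.

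For Assumption~\ref{asmp:linear_policy_margin} I would separate the two sign constraints (``sign of $v_h$'' on-path, ``sign $+1$'' off-path) into orthogonal subspaces. Pick four mutually orthogonal unit vectors $u_h^{(1)}, u_h^{(2)}, e', e''$ at each level (cycling modulo $d$ when $H > d$), and set
$\phi(s^{\mathrm{on}}_h, a) = \sqrt{1-\Delta^2/2}\, e' + (\Delta/\sqrt 2)\, a\, u_h^{(1)}$ and
$\phi(s^{\mathrm{off}}_h, a) = \sqrt{1-\Delta^2/2}\, e'' + (\Delta/\sqrt 2)\, a\, u_h^{(2)}$.
Both features are unit norm for every $\Delta \le \Delta_0 := \sqrt 2$. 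Choosing $\theta^v_h = (v_h/\sqrt 2)\, u_h^{(1)} + (1/\sqrt 2)\, u_h^{(2)}$ gives $\|\theta^v_h\| = 1$, and a one-line computation yields $\langle\theta^v_h,\phi(s^{\mathrm{on}}_h, v_h) - \phi(s^{\mathrm{on}}_h, -v_h)\rangle = \Delta$ and $\langle\theta^v_h,\phi(s^{\mathrm{off}}_h, +1) - \phi(s^{\mathrm{off}}_h, -1)\rangle = \Delta$, so the margin is exactly $\Delta$ at every state.

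The lower bound then follows by coupling. Since transitions and per-step rewards are independent of $v$, the distribution of the agent's observations under $M_v$ is identical to that under a ``null'' MDP with no terminal bonus, as long as the trajectory has not reached the leaf indexed by $v$. By a union bound, the probability that an algorithm issuing $T$ trajectories ever hits the correct leaf under a uniform prior on $v$ is at most $T/|V|$. A Le Cam / Fano argument, analogous to the one in the proof of Theorem~\ref{thm:lb_q_all}, then shows that the posterior over $v$ is near uniform whenever $T = o(|V|)$, forcing the output policy to be $1/4$-suboptimal for $M_v$ with constant probability unless $T = \Omega(\min\{2^H, 2^d\})$.

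The main obstacle is jointly maintaining the gap and margin conditions without ever leaking information about $v$ through the observable trajectory. On the reward side, the tiebreaker rewards must be $v$-independent, yet of size at least $\rho = 1/(2n)$ to preserve the off-path gap, all while keeping the total reward in $[0,1]$; the small scaling of the terminal bonus to $1 - H\rho$ is what makes the analysis survive the worst-case $v$ (e.g.\ $v = (-1,\ldots,-1)$) where on-path gaps are smallest. On the feature side, a single unit-norm $\theta_h^v$ must realize a $v_h$-dependent sign at the on-path state and a \emph{fixed} default sign at every off-path state with the same margin $\Delta$; embedding these conflicting constraints into orthogonal flag directions is the delicate step that both yields an absolute-constant $\Delta_0$ and, through the cyclic reuse of directions when $H > d$, forces the lower bound to saturate at $\min\{2^H, 2^d\}$ rather than the naive $2^H$.
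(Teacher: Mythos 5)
There is a genuine gap, and it sits at the heart of the construction. Your feature map is indexed by whether a state is ``on-path'' or ``off-path'', i.e.\ $\phi(s^{\mathrm{on}}_h,\cdot)$ lives in the $u_h^{(1)}$ direction and $\phi(s^{\mathrm{off}}_h,\cdot)$ in the $u_h^{(2)}$ direction. But which state of level $h$ is on-path is determined by $v$, the very quantity the learner must discover. The theorem (and any Le Cam / INDEX-QUERY reduction) requires a \emph{single} feature extractor $\phi$ shared by the entire family: $\phi$ is handed to the agent up front, and under the \kt{} oracle the transitions are known as well, so the only unknown object may be the reward. With your assignment the agent can recover $v$ by simply inspecting $\phi$ (checking which state at each level carries the $u_h^{(1)}$-type feature) and output $\pi^*_v$ with zero samples; conversely, if you make $\phi$ $v$-independent you can no longer distinguish on-path from off-path states and the margin construction collapses. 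The orthogonal-subspace trick with four directions per level therefore cannot work: the real difficulty is that a \emph{fixed} feature map must be such that, for every level $h$ and \emph{every one} of the $2^h$ candidate states $s$ in that level, some unit-norm $\theta_h$ realizes with margin $\triangle$ the policy ``play $a_2$ at $s$, play $a_1$ at all other states of level $h$.'' A single linear functional over a feature set spanning only $O(1)$ directions cannot isolate an arbitrary one of exponentially many states.

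The paper supplies exactly the missing ingredient via a geometric packing lemma (Lemma~\ref{lem:geo}): there is a set $\mathcal{N}\subset\mathbb{S}^{d-1}$ of size $(1/\triangle)^{\Omega(d)}$ such that each point is separated, with margin, by a hyperplane from the convex hull of all the others. Assigning these vectors as ($v$-independent) state features, and putting the action distinction into two orthogonal coordinate blocks, one gets for each candidate special state $s$ a $\theta_h$ that scores $a_2$ positively only at $s$ and negatively at every other state, while a separate fixed direction handles the all-$a_1$ case. This is also what produces the quantitative shape of the theorem that your construction does not explain: the packing has only $(1/\triangle)^{\Omega(d)}$ points, which forces $\triangle\le\triangle_0$ for an absolute constant, requires $d=\Theta(H)$ to cover $2^H$ leaves, and yields the $\min\{2^H,2^d\}$ saturation (when $d<H$ one shrinks the tree to depth $d$; ``cycling modulo $d$'' does not substitute for this). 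Your reward-side bookkeeping (recursively decremented values giving gap $1/(2\min\{H,d\})$ with a unique optimal action everywhere) and the final reduction to INDEX-QUERY do match the paper, but without a $v$-independent feature family with the convex-hull separation property the verification of Assumption~\ref{asmp:linear_policy_margin} cannot go through.
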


Again, our lower bound can be easily generalized to the case that $|\actions| > 2$.

Compared with Theorem~\ref{thm:lb_q_all}, Theorem~\ref{thm:lb_margin} is even more pessimistic, in the sense that even with perfect representation with benign properties (gap and margin), the agent still needs to sample exponential number of samples.
It also suggests that policy-based learning could be very different from supervised learning.


\subsection{Proof Ideas}\label{sec:proof_ideas}
\paragraph{The binary tree hard instance. } 
All our lower bound are proved based on reductions from the following hard instance.
In this instance, both the transition $\trans$ and the reward $R$ are deterministic. 
There are $H$ levels of states, which form a full binary tree of depth $H$.
There are $2^{h}$ states in level $h$, and thus $2^{H} - 1$ states in total.
Among all the $2^{H - 1}$ states in level $H - 1$, there is only one state with reward $R = 1$, and for all other states in the MDP, the corresponding reward value $R = 0$.
Intuitively, to find a $1/2$-optimal policy for such MDPs, the agent must enumerate all possible states in level $H - 1$ to find the state with reward $R = 1$.
Doing so intrinsically induces a sample complexity of $\Omega(2^H)$.
This intuition is formalized in Theorem~\ref{thm:indq} using Yao's minimax principle~\citep{yao1977probabilistic}.

\paragraph{Lower bound for value-based and model-based learning}
We now show how to construct a set of features so that Assumption~\ref{asmp:q_star_linear}-\ref{asmp:linear_trans} hold.
Our main idea is to the utilize the following fact regarding the identity matrix: $\varepsilon$-$\mathrm{rank}(I_{2^H}) \le O(H / \varepsilon^2)$.
Here for a matrix $A \in \mathbb{R}^{n \times n}$, its {\em $\varepsilon$-$\mathrm{rank}$} (a.k.a {\em approximate rank}) is defined to be $\min \{ \mathrm{rank}(B): B \in \mathbb{R}^{n \times n}, \|A - B\|_{\infty} \le \varepsilon \}$,
where we use $\|\cdot\|_{\infty}$ to denote the entry-wise $\ell_{\infty}$ norm of a matrix. 
The upper bound $\varepsilon$-$\mathrm{rank}(I_{n}) \le O(\log n / \varepsilon^2)$ was first proved in~\cite{alon2009perturbed} using the Johnson-Lindenstrauss Lemma~\citep{johnson1984extensions}, and we also provide a proof in Lemma~\ref{lem:jl}.
The concept of $\varepsilon$-$\mathrm{rank}$ has wide applications in theoretical computer science~\citep{alon2009perturbed, barak2011rank, alon2013approximate, alon2014cover, chen2019classical}, but to our knowledge, this is the first time that it appears in reinforcement learning.

This fact can be alternatively stated as follow: there exists $\Phi \in \mathbb{R}^{2^H \times O(H / \varepsilon^2)}$ such that $\|I_{2^H} - \Phi \Phi^{\top}\|_{\infty} \le \varepsilon$.
We interpret each row of $\Phi$ as the feature of a state in the binary tree. 
By construction of $\Phi$, now features of states in the binary tree have a nice property that (i) each feature vector has approximately unit norm and (ii) different feature vector are nearly orthogonal. 
Using this set of features, we can now show that Assumption~\ref{asmp:q_star_linear}-\ref{asmp:linear_trans} hold.
Here we prove Assumption~\ref{asmp:q_star_linear} holds as an example  and prove other assumptions also hold in the appendix.
To prove Assumption~\ref{asmp:q_star_linear}, we note that in the binary tree hard instance, for each level $h$, only a single state satisfies $Q^* = 1$, and all other states satisfy $Q^* = 0$. We simply take $\theta_h$ to be the feature of the state with $Q^* = 1$. 
Since all feature vectors are nearly orthogonal, Assumption~\ref{asmp:q_star_linear} holds.

Since the above fact regarding the $\varepsilon$-$\mathrm{rank}$ of the identity matrix can be proved by simply taking each row of $\Phi$ to be a random unit vector, our lower bound reveals another intriguing (yet pessimistic) aspect of Assumption~\ref{asmp:q_star_linear}-\ref{asmp:linear_trans}: for the binary tree instance, almost all feature extractors induce a hard MDP instance.
This again suggests that a good representation itself may not necessarily lead to efficient RL and additional assumptions (e.g. on the reward distribution) could be crucial. 

\paragraph{Lower bound for policy-based learning.} 
It is straightfoward to construct a set of feature vectors for the binary tree instance so that Assumption~\ref{asmp:linear_policy_realizable} holds, even if $d = 1$. 
We set $\phi(s, a)$ to be $+1$ if $a = a_1$ and $-1$ if $a = a_2$.
For each level $h$, for the unique state $s$ in level $h$ with $Q^* = 1$, we set $\theta_h$ to be $1$ if $\pi^*(s) = a_1$ and $-1$ if $\pi^*(s) = a_2$. With this construction,
Assumption~\ref{asmp:linear_policy_realizable} holds.

To prove that the lower bound under Assumption~\ref{asmp:linear_policy_margin}, we use a new reward function for states in level $H - 1$ in the binary tree instance above so that there exists a unique optimal action for each state in the MDP. 
See Figure~\ref{fig:tree_policy} for an example with $H = 3$ levels of states.
Another nice property of the new reward function is that for all states $s$ we always have $\pi^*(s) = a_1$.
Now, we define $2^{H - 1}$ different new MDPs as follow: for each state in level $H - 1$, we change its original reward (defined in Figure~\ref{fig:tree_policy}) to $1$. An exponential sample complexity lower bound for these MDPs can be proved using the same argument as the original binary tree hard instance, and now we show this set of MDPs satisfy Assumption~\ref{asmp:linear_policy_margin}.
We first show in Lemma~\ref{lem:geo} that there exists a set $\mathcal{N} \subseteq \mathbb{S}^{d - 1}$ with $|\mathcal{N}| = (1 / \triangle)^{\Omega(d)}$, so that for each $p \in \mathcal{N}$, there exists a hyperplane $L$ that separates $p$ and $\mathcal{N} \setminus \{p\}$, and all vectors in $\mathcal{N}$ have distance at least $\triangle$ to $L$. 
Equivalently, for each $p \in \mathcal{N}, $we can always define a linear function $f_p$ so that $f_p(p) \ge \triangle$ and $f_p(q) \le -\triangle$ for all $q \in \mathcal{N} \setminus \{p\}$.
This can be proved using standard lower bounds on the size of $\varepsilon$-nets.
Now we simply use vectors in $\mathcal{N}$ as features of states. 
By construction of the reward function, for each level $h$, there could only be two possible cases for the optimal policy $\pi^*$.
I.e., either $\pi^*(s) = a_1$ for all states in level $h$, or $\pi^*(s) = a_2$ for a unique state $s$ and $\pi^*(s') = a_1$ for all $s \neq s'$.
In both cases, we can easily define a linear function with margin $\triangle$ to implement the optimal policy $\pi^*$, and thus Assumption~\ref{asmp:linear_policy_margin} holds.
Notice that in this proof, we critically relies on $d = \Theta(H)$, so that we can utilize the curse of dimensionality to construct a large set of vectors as features.

\section{Separations}
\label{sec:dis}
\textbf{Perfect representation vs. good-but-not-perfect representation.}
For value-based learning in deterministic systems, \cite{wen2013efficient} showed polynomial sample complexity upper bound when the representation can perfectly predict the $Q$-function.
In contrast, if the representation is only able to {\em approximate} the $Q$-function, then the agent requires exponential number of trajectories.
This exponential separation demonstrates a \emph{provable exponential benefit of better representation.}

\textbf{Value-based learning vs. policy-based learning.}
	Note that if the optimal $Q$-function can be perfectly predicted by the provided representation, then the optimal policy can also be perfectly predicted using the same representation.
	Since \cite{wen2013efficient} showed polynomial sample complexity upper bound when the representation can perfectly predict the $Q$-function, our lower bound on policy-based learning, which applies to perfect representations, thus demonstrates that \emph{the ability of predicting the $Q$-function is much stronger than that of predicting the optimal policy.}

\textbf{Supervised learning vs. reinforcement learning.}
For policy-based learning, if the planning horizon $H=1$, the problem becomes learning a linear classifier, for which there are polynomial sample complexity upper bounds.
For policy-based learning, the agent needs to learn $H$ linear classifiers sequentially.
Our lower bound on policy-based learning shows the sample complexity dependency on $H$ is exponential.

\textbf{Imitation learning vs. reinforcement learning.}
In imitation learning (IL), the agent can observe trajectories induced by the optimal policy (expert). 
If the optimal policy is linear in the given representation, it can be shown that the simple behavior cloning algorithm only requires polynomial number of samples to find a near-optimal policy~\citep{ross2011reduction}.
Our Theorem~\ref{thm:lb_margin} shows if the agent cannot observe expert's behavior, then it requires exponential number of samples.
Therefore, our lower bound shows there is an \emph{exponential separation between policy-based RL and IL} when function approximation is used.

\section{Acknowledgments}
\label{sec:ack}
The authors would like to thank Yuping Luo, Wenlong Mou, Martin Wainwright, Mengdi Wang and Yifan Wu for insightful discussions. 
Also, the authors would also like to gratefully acknowledge Benjamin Van Roy, Shi Dong, Tor Lattimore and Csaba Szepesv\'{a}ri for sharing a draft of their work and their comments.
Simon S. Du is supported by NSF grant DMS-1638352 and the Infosys Membership. 
Sham M. Kakade acknowledges funding from the Washington Research Foundation Fund for Innovation in Data-Intensive Discovery; the NSF award CCF 1740551; and the ONR award N00014-18-1-2247. 
Ruosong Wang is supported in part by NSF IIS1763562, AFRL CogDeCON FA875018C0014, and DARPA SAGAMORE HR00111990016.
Part of this work was done while Simon S. Du was visiting Google Brain Princeton and Ruosong Wang was visiting Princeton University.


\bibliography{simonduref}
\bibliographystyle{iclr2020_conference}

\newpage
\appendix

\section{Proofs of Lower Bounds}
\label{sec:lower_proof}
In this section we present our lower bounds.
It will also be useful to define the value function of a given state $s \in \states_h$ as $V_h^\pi(s)=\expect\left[\sum_{h' = h}^{H - 1}r_{h'}\mid s_h =s, \pi\right]$.
For simplicity, we denote $V_h^*=V_h^{\pi^*}(s)$.
Throughout the appendix, for the $Q$-function $Q_h^{\pi}$ and $Q_h^*$ and the value function $V_h^\pi$ and $V_h^*$, we may omit $h$ from the subscript when it is clear from the context.

We first introduce the \idx~problem, which will be useful in our lower bound arguments. 
\begin{defn}[\idx]\label{def:query}
	In the $\idxn$ problem, there is an underlying integer $i^* \in [n]$.
	The algorithm sequentially (and adaptively) outputs guesses $i \in [n]$ and queries whether $i = i^*$.
	The goal is to output $i^*$, using as few queries as possible.
\end{defn}
\begin{defn}[$\delta$-correct algorithms]
	For a real number $\delta\in (0,1)$, we say a randomized algorithm $\cA$ is $\delta$-correct for $\idxn$, if for any underlying integer $i^* \in [n]$, with probability at least $1 - \delta$, $\cA$ outputs $i^*$.
\end{defn}
The following theorem states the query complexity of $\mathsf{INDQ}_n$ for $0.1$-correct algorithms, whose proof is provided in Section~\ref{proof:indq}.
\begin{thm}
	\label{thm:indq}
	Any $0.1$-correct algorithm $\cA$ for $\idxn$ requires at least $0.9n$ queries in the worst case.
\end{thm}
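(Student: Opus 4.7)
The plan is to invoke Yao's minimax principle with the uniform distribution on $i^*\in[n]$ and reduce the problem to a clean combinatorial statement about deterministic algorithms. First I would fix any $0.1$-correct randomized algorithm $\cA$ and condition on its internal coins $r$, so that $\cA_r$ becomes deterministic. The key observation is that, along any execution, the oracle returns ``no'' until (and unless) the algorithm happens to query the true $i^*$; hence the behaviour of $\cA_r$ is completely determined by a fixed sequence of candidate queries $i_1,\ldots,i_{T_r}$ together with a single ``guess'' $g_r$ that $\cA_r$ outputs on the all-``no'' transcript. Consequently $\cA_r$ succeeds exactly on inputs $i^*\in\{i_1,\ldots,i_{T_r},g_r\}$, a set of size at most $T_r+1$.

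From this combinatorial lemma I would bound the worst-case query count $T:=\sup_{r,i^*}T(\cA_r,i^*)$ of $\cA$. For each fixed $r$, the success probability of $\cA_r$ under uniform $i^*$ is at most $(T_r+1)/n\le (T+1)/n$, so averaging over $r$ and using Fubini yields
\[
\Pr_{i^*\sim\mathrm{Unif}([n]),\,r}\bigl[\cA_r \text{ correct}\bigr] \;\le\; \frac{T+1}{n}.
\]
On the other hand, $0.1$-correctness on \emph{every} fixed $i^*$ implies the left-hand side is at least $0.9$, whence $T+1\ge 0.9n$, i.e.\ $T\ge 0.9n-1$. Absorbing the additive constant (or equivalently weakening the correctness parameter by $o(1)$) delivers the claimed $0.9n$ bound on the worst-case query complexity.

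The only mildly subtle point, and therefore the step I expect will need the most care in the write-up, is isolating the combinatorial lemma bounding the size of the ``correct set''. One must resist over-thinking the case in which $\cA_r$ aborts early upon receiving a ``yes'' answer: early termination only decreases the number of queries on those particular inputs, so it does not affect the upper bound $T_r+1$ on the number of inputs for which $\cA_r$ outputs the correct answer. Once this observation is in hand, the remainder is a textbook Yao's-style averaging argument, and no information-theoretic machinery (Le Cam, Fano, etc.) is required to extract the linear-in-$n$ lower bound.
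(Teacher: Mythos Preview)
Your proposal is correct and follows essentially the same Yao's-principle argument as the paper. The only difference is that in the paper's formulation the queries \emph{are} the output (the algorithm is deemed correct iff one of its queries equals $i^*$), so there is no separate final guess $g_r$; the correct set then has size at most $T_r$ rather than $T_r+1$, which removes your additive $-1$ and yields the stated $0.9n$ bound exactly.
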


\subsection{Proof of Lower Bound for Value-based Learning}\label{sec:lb_value}
In this section we prove Theorem~\ref{thm:lb_q_all}.
We need the following existential result, whose proof is provided in Section~\ref{proof:jl}.
\begin{lem}\label{lem:jl}
For any $n  > 2$, there exists a set of vectors $\mathcal{P} = \{p_0, p_1, \ldots, p_{n - 1}\} \subset \mathbb{R}^d$ with $d = \lceil 8 \ln n / \varepsilon^2 \rceil$ such that 
\begin{enumerate}
\item $\|p_i\|_2 = 1$ for all $0 \le i \le n - 1$; 
\item $\left| \langle p_i, p_j \rangle \right| \le \varepsilon$ for any $0 \le i, j \le n - 1$ with $i \neq j$.
\end{enumerate}
\end{lem}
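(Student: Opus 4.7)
The plan is to prove the lemma by the probabilistic method using a simple random construction with $\pm 1/\sqrt{d}$ entries, which automatically guarantees the unit norm condition and reduces the near-orthogonality condition to a Hoeffding-type bound.

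Concretely, I would sample each vector $p_i$ independently as $p_i = \frac{1}{\sqrt{d}}(\epsilon_{i,1}, \dots, \epsilon_{i,d})^\top$ where the $\epsilon_{i,k}$ are i.i.d.\ Rademacher variables taking values $\pm 1$ with probability $1/2$ each. Condition (1) is then automatic since $\|p_i\|_2^2 = \frac{1}{d}\sum_{k=1}^{d} \epsilon_{i,k}^2 = 1$. For condition (2), fix a pair $i \neq j$ and note that the products $X_k := \epsilon_{i,k}\epsilon_{j,k}$ are themselves i.i.d.\ Rademacher, so that $\langle p_i, p_j \rangle = \frac{1}{d}\sum_{k=1}^d X_k$ is the average of $d$ independent $\pm 1$ variables. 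Hoeffding's inequality then yields
\[
  \Pr\!\left[\,|\langle p_i, p_j\rangle| > \varepsilon\,\right] \;\le\; 2\exp\!\left(-\tfrac{d\varepsilon^2}{2}\right).
\]

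Next I would apply a union bound over all $\binom{n}{2} < n^2/2$ unordered pairs and plug in the chosen dimension $d = \lceil 8 \ln n / \varepsilon^2 \rceil$, which gives
\[
  \Pr\!\left[\,\exists\, i\neq j:\ |\langle p_i, p_j\rangle| > \varepsilon\,\right] \;\le\; n^2 \exp\!\left(-\tfrac{d\varepsilon^2}{2}\right) \;\le\; n^2 \cdot n^{-4} \;=\; n^{-2}.
\]
For $n > 2$ this is strictly less than $1$, so with positive probability the random set $\{p_1, \dots, p_n\}$ simultaneously satisfies both (1) and (2), establishing existence.

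There is no real obstacle here beyond tracking constants: the factor of $8$ in the definition of $d$ is exactly what is needed to make the Hoeffding exponent $-d\varepsilon^2/2 \le -4\ln n$ beat the $n^2$ pairs in the union bound with room to spare. A Gaussian variant (sample $p_i$ from the uniform distribution on $\mathbb{S}^{d-1}$, or normalize standard Gaussians) would work equally well and is the route more directly connected to the Johnson–Lindenstrauss lemma cited in Section~\ref{sec:proof_ideas}, but the Rademacher construction above avoids the minor technicality of handling the norm concentration after normalization, so I would favor it for brevity.
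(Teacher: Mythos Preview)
Your proof is correct and follows the same high-level strategy as the paper: sample $n$ random vectors in $\mathbb{R}^d$, bound the probability that any fixed pair has large inner product via a concentration inequality, and union-bound over the $\binom{n}{2}$ pairs. The only difference is the random ensemble: the paper draws each $p_i$ uniformly from $\mathbb{S}^{d-1}$ and invokes a tail bound for a single coordinate of a random unit vector (Lemma~2.2 of Dasgupta--Gupta), whereas you use scaled Rademacher vectors and Hoeffding's inequality. Your choice is arguably cleaner here because the unit-norm condition is automatic and the concentration step is a textbook Hoeffding bound, while the paper's spherical construction needs the auxiliary lemma to control $|\langle u,v\rangle|$ after reducing (by rotational invariance) to the first coordinate of a random unit vector. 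Both routes land on the same $d = \lceil 8\ln n/\varepsilon^2\rceil$; the constants work out identically, and you correctly track the factor of $2$ from the two-sided Hoeffding bound against the $\binom{n}{2} < n^2/2$ pairs.
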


Now we give the construction of the hard MDP instances.
We first define the transitions and the reward functions.
In the hard instances, both the rewards and the transitions are deterministic. 
There are $H$ levels of states, and level $h \in [H]$ contains $2^h$ distinct states.
Thus we have $|\states| = 2^{H} - 1$. If $|\states| > 2^H - 1$ we simply add dummy states to the state space $\states$.
We use $s_0, s_1, \ldots, s_{2^H - 2}$ to name these states.
Here, $s_0$ is the unique state in level $h = 0$, $s_1$ and $s_2$ are the two states in level $h = 1$, $s_3$, $s_4$, $s_5$ and $s_6$ are the four states in level $h= 2$, etc. 
There are two different actions, $a_1$ and $a_2$, in the MDPs.
For a state $s_i$ in level $h$ with $h < H - 1$, playing action $a_1$ transits state $s_i$ to state $s_{2i + 1}$ and playing action $a_2$ transits state $s_i$ to state $s_{2i + 2}$, where $s_{2i + 1}$ and $s_{2i + 2}$ are both states in level $h + 1$.
See Figure~\ref{fig:tree_reward} for an example with $H = 3$.

In our hard instances, $r(s, a) = 0$ for all $(s, a)$ pairs except for a unique state $s$ in level $H - 2$ and a unique action $a \in \{a_1, a_2\}$.
It is convenient to define $\reward(s') = r(s, a)$, if playing action $a$ transits $s$ to $s'$.
For our hard instances, we have $\reward(s) = 1$ for a unique node $s$ in level $H - 1$ and $\reward(s) = 0$ for all other nodes.

\begin{figure}
\centering
\includegraphics{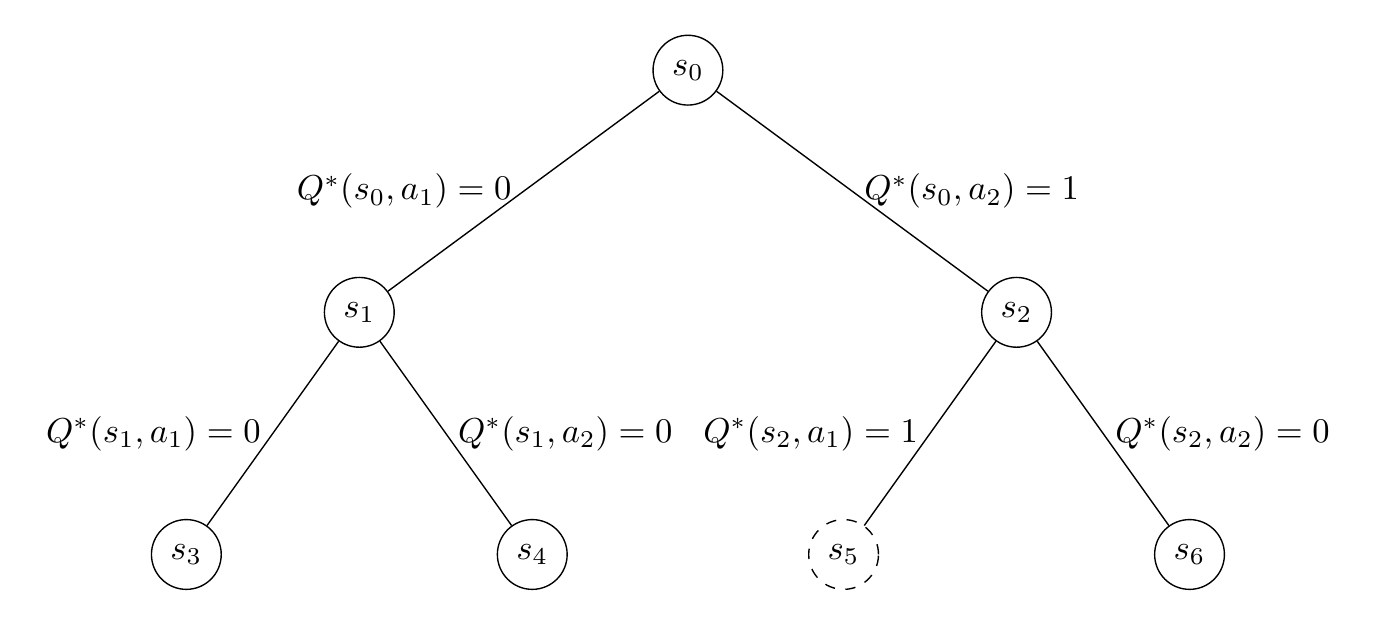}
\caption{An example with $H = 3$. For this example, we have $\reward(s_5) = 1$ and $\reward(s) = 0$ for all other states $s$.
The unique state $s_5$ which satisfies $\reward(s) = 1$ is marked as dash in the figure.
The induced $Q^*$ function is marked on the edges. }
\label{fig:tree_reward}
\end{figure}

Now we define the features map $\phi(\cdot, \cdot)$.
Here we assume $d \ge 2 \cdot \lceil 8\ln 2 \cdot H/\approxerr^2 \rceil$, and otherwise we can simply decrease the planning horizon so that $d \ge 2 \cdot \lceil 8\ln 2 \cdot H/\approxerr^2 \rceil$.
We invoke Lemma~\ref{lem:jl} to get a set $\mathcal{P} = \{p_0, p_1, \ldots, p_{2^H - 1}\} \subset \mathbb{R}^{d / 2}$.
For each state $s_i$, $\phi(s_i, a_1) \in \mathbb{R}^{d}$ is defined to be $[p_i; 0]$, and $\phi(s_i, a_2) \in \mathbb{R}^{d}$ is defined to be $[0; p_i]$.
This finishes the definition of the MDPs.
We now show that no matter which state $s$ in level $H - 1$ satisfies $\reward(s) = 1$, the resulting MDP always satisfies Assumption~\ref{asmp:q_all_linear}.
\paragraph{Verifying Assumption~\ref{asmp:q_all_linear}.}
By construction, for each level $h \in [H]$, there is a unique state $s_h$ in level $h$ and action $a_h \in \{a_1, a_2\}$, such that $Q^*(s_h, a_h) = 1$. For all other $(s, a)$ pairs such that $s \neq s_h$ or $a \neq a_h$, it is satisfied that $Q^*(s, a) = 0$. 
For a given level $h$ and policy $\pi$, we take $\theta_h^{\pi}$ to be $Q^{\pi}(s_h, a_h) \cdot \phi(s_h, a_h)$.
Now we show that $|Q^{\pi}(s, a) - \langle \theta_h^{\pi}, \phi(s, a) \rangle| \le \approxerr$ for all states $s$ in level $h$ and $a \in \{a_1, a_2\}$.
\begin{description}
\item[Case I: $a \neq a_h$.] In this case, we have $Q^{\pi}(s, a) = 0$ and $\langle \theta_h^{\pi}, \phi(s, a) \rangle = 0$, since $\theta_h^{\pi}$ and $\phi(s, a)$ do not have a common non-zero coordinate. 
\item[Case II: $a = a_h$ and $s \neq s_h$. ] In this case, by the second property of $\mathcal{P}$ in Lemma~\ref{lem:jl} and the fact that $Q^{\pi}(s_h, a_h) \le 1$, we have $|\langle \theta_h^{\pi}, \phi(s, a) \rangle| \le \approxerr$. Meanwhile, we have $Q^{\pi}(s, a) = 0$. 
\item[Case III: $a = a_h$ and $s = s_h$.] In this case, we have $\langle \theta_h^{\pi}, \phi(s, a) \rangle = Q^{\pi}(s_h, a_h)$.
\end{description}


Finally, we prove any algorithm that solves these MDP instances and succeeds with probability at least $0.9$ needs to sample at least $\frac{9}{20} \cdot 2^H$ trajectories. We do so by providing a reduction from $\idxn[2^{H - 1}]$ to solving MDPs.
Suppose we have an algorithm for solving these MDPs, we show that such an algorithm can be transformed to solve $\idxn[2^{H - 1}]$.
For a specific choice of $i^*$ in $\idxn[2^{H - 1}]$, there is a corresponding MDP instance with
\[
\reward(s) = \begin{cases}
1 & \text{if $s = s_{i^* + 2^{H - 1} - 1}$}\\
0 & \text{otherwise}
\end{cases}.
\]
Notice that for all MDPs that we are considering, the transition and features are always the same. 
Thus, the only thing that the learner needs to learn by interacting with the environment is the reward value. 
Since the reward value is non-zero only for states in level $H - 1$, each time the algorithm for solving MDP samples a trajectory that ends at state $s_i$ where $s_i$ is a state in level $H - 1$, we query whether $i^* = i - 2^{H - 1} + 1$ or not in $\idxn[2^{H - 1}]$, and return reward value 1 if $i^* = i - 2^{H - 1} + 1$ and 0 otherwise. 
If the algorithm is guaranteed to return a $1/2$-optimal policy, then it must be able to find $i^*$. 

\subsection{Proof of Lower Bound for Model-based Learning}
\begin{proof}[Proof of Theorem~\ref{thm:lb_transition}]
We use the same construction as in the proof of Theorem~\ref{thm:lb_q_all}.
Note we just need to verify that the construction satisfies Assumption~\ref{asmp:linear_trans}.
By construction, for all $h \in \{1, 2, \ldots, H - 1\}$, for each state $s'$ in level $h$, there exists a unique $(s, a)$ pair such that playing action $a$ transits $s$ to $s'$, and we take $\psi(s') = \phi(s, a)$.
We also take $\beta_h = 0$ for $h \in \{0, 1, \ldots, H - 4, H - 3\}$ and $\beta_{H - 2} = \phi(s, a)$ where $(s, a)$ is the unique pair with $R(s, a) = 1$.
Now, according to the design of $\phi(\cdot, \cdot)$ and Lemma~\ref{lem:jl}, Assumption~\ref{asmp:linear_trans} is satisfied.
\end{proof}

\subsection{Proof of Lower Bound for Policy-based Learning}
\label{sec:lb_policy}
In this section, we present our hardness results for linear policy learning.
We first prove a weaker lower bound which only satisfies Assumption~\ref{asmp:linear_policy_realizable}, and then prove Theoerem~\ref{thm:lb_margin}.

\paragraph{Warmup: Lower Bound for Linear Policy Without Margin.}
To present the hardness results, we first give the construction of the hard instances. 
The transitions and rewards functions of these MDP instances are exactly the same as those in Section~\ref{sec:lb_value}.
The main difference is in the definition of the feature map $\phi(\cdot, \cdot)$.
For this lower bound, we define $\phi(s, a) = 1 \in \mathbb{R}$ if $a = a_1$ and $\phi(s, a) = -1$ if $a = a_2$.
By construction, these MDPs satisfy Assumption~\ref{asmp:weak_gap} with $\rho = 1$.
We now show that no matter which state $s$ in level $H - 1$ satisfies $\reward(s) = 1$\footnote{Recall that $\reward(s') = r(s, a)$, if playing action $a$ transits $s$ to $s'$. Moreover, for the instances in Section~\ref{sec:lb_value}, we have $\reward(s) = 1$ for a unique node $s$ in level $H - 1$ and $\reward(s) = 0$ for all other nodes.}, the resulting MDP always satisfies Assumption~\ref{asmp:linear_policy_realizable}.

\paragraph{Verifying Assumption~\ref{asmp:linear_policy_realizable}.}
Recall that for each level $h \in [H]$, there is a unique state $s_h$ in level $h$ and action $a_h \in \{a_1, a_2\}$, such that $Q^*(s_h, a_h) = 1$. For all other $(s, a)$ pairs such that $s \neq s_h$ or $a \neq a_h$, it is satisfied that $Q^*(s, a) = 0$. 
We simply take $\theta_h$ to be $1$ if $a_h = a_1$, and take $\theta_h$ to be $-1$ if $a_h = a_2$.

Using the same lower bound argument (by reducing $\idx$ to MDPs), we have the following theorem.
\begin{thm}
	\label{thm:weak}
	There exists a family of MDPs and a feature map $\phi\left(\cdot,\cdot\right)$ that satisfy Assumption~\ref{asmp:linear_policy_realizable} with $d = 1$ and Assumption~\ref{asmp:weak_gap} with $\rho = 1$, such that any algorithm that returns a $1/2$-optimal policy with probability at least $0.9$ needs to sample $\Omega\left(2^{H}\right)$ trajectories.
\end{thm}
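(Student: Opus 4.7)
Reuse the depth-$H$ binary-tree hard instance from Section~\ref{sec:lb_value} verbatim: deterministic transitions (action $a_1$ goes to the left child, $a_2$ to the right) with all rewards zero except along the single edge into a hidden rewarding leaf $s^{\star}$ of level $H-1$, which gives reward $1$. Set $d=1$ and define $\phi(s,a_1)=+1$, $\phi(s,a_2)=-1$ independently of $s$. At every level $h$ there is a unique state $s_h^{\star}$ and action $a_h^{\star}\in\{a_1,a_2\}$ with $Q^{\star}(s_h^{\star},a_h^{\star})=1$, and $Q^{\star}$ vanishes on every other state--action pair, so every nonzero gap equals $1$ and Assumption~\ref{asmp:weak_gap} holds with $\rho=1$. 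For Assumption~\ref{asmp:linear_policy_realizable}, fix the optimal policy that plays $a_h^{\star}$ at \emph{every} state of level $h$ (still optimal, since the two actions are tied with $Q^{\star}=0$ away from $s_h^{\star}$), and take $\theta_h=+1$ if $a_h^{\star}=a_1$ and $\theta_h=-1$ otherwise. Then $\argmax_a\langle\theta_h,\phi(s,a)\rangle=\{a_h^{\star}\}\ni\pi^{\star}(s)$ for every $s\in\mathcal S_h$, so $d=1$ suffices.

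\textbf{Reduction from $\idxn[2^{H-1}]$.} Suppose $\mathcal A$ samples $T$ trajectories and returns a $1/2$-optimal policy with probability $\ge 0.9$; I will compile $\mathcal A$ into an $\idxn[2^{H-1}]$ algorithm whose success probability is at least a positive constant, whereupon Theorem~\ref{thm:indq} forces $T=\Omega(2^H)$. Put each index $i^{\star}\in[2^{H-1}]$ in bijection with the leaf $s_{i^{\star}+2^{H-1}-1}$ and simulate $\mathcal A$ on the MDP whose rewarding leaf is that one. Since the transition graph and the feature map $\phi$ are shared across the entire family, the only unknown to $\mathcal A$ is the reward; each trajectory $\mathcal A$ samples terminates at a uniquely determined leaf whose index I forward to the $\idx$ oracle, feeding back reward $1$ on ``yes'' and $0$ on ``no''. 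This consumes exactly $T$ oracle queries.

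\textbf{Extracting $i^{\star}$ from the output policy.} Since transitions are deterministic, the expected return of the output policy $\widehat\pi$ factorizes as $V^{\widehat\pi}(s_0)=\prod_{h=0}^{H-1}\Pr[\widehat\pi(s_h^{\star})=a_h^{\star}]$, so on the $0.9$-probability event that $\widehat\pi$ is $1/2$-optimal the product is $\ge 1/2$; since every factor lies in $[0,1]$, each factor must individually be $\ge 1/2$. Hence the majority action $\argmax_a\widehat\pi(s_h^{\star},a)$ equals $a_h^{\star}$ at every level, and tracing this greedy path through the known deterministic transition graph lands on $s^{\star}$, identifying $i^{\star}$. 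The boundary case $\Pr[\widehat\pi(s_h^{\star})=a_h^{\star}]=1/2$ (tied mode) is avoided by asking for a $(1/2-\epsilon)$-optimal policy for an arbitrarily small fixed $\epsilon>0$, an asymptotically equivalent task that makes every factor strictly $>1/2$. The only non-routine step beyond the binary-tree $\idx$ reduction of Section~\ref{sec:lb_value} is this extraction of a deterministic index from a possibly stochastic $\widehat\pi$; it adds no extra trajectory samples, just $H$ deterministic reads of the returned policy, so an $\Omega(2^H)$ bound follows directly from Theorem~\ref{thm:indq}.
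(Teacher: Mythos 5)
Your proposal is correct and follows essentially the same route as the paper: the identical binary-tree instance with the one-dimensional action-only features $\phi(s,a_1)=+1$, $\phi(s,a_2)=-1$, the same choice of $\theta_h$ determined by the unique optimal action at each level, and the same reduction to $\idxn[2^{H-1}]$ via Theorem~\ref{thm:indq}. Your additional discussion of how to extract $i^*$ from a possibly stochastic returned policy only fleshes out a step the paper leaves implicit, so there is nothing further to compare.
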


\paragraph{Proof of Theoerem~\ref{thm:lb_margin}}
Now we prove Theoerem~\ref{thm:lb_margin}.
In order to prove Theoerem~\ref{thm:lb_margin}, we need the following geometric lemma whose proof is provided in Section~\ref{sec:proof_geo}.
\begin{lem}\label{lem:geo}
	Let $d\in \NN_{+}$ be a positive integer and $\epsilon\in (0,1)$ be a real number.
	Then there exists a set of points $\cN\subset \SS^{d-1}$ with size $|\cN| =\Omega(1/\epsilon^{d/2})$ such that for every point $x\in \cN$,
	\begin{align}
	\label{eqn:dist}
	\inf_{y\in \conv(\cN\backslash\{x\})}\|x-y\|_2 \ge \epsilon/2.
	\end{align}
\end{lem}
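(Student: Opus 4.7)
The plan is to take $\cN$ to be any maximal $\sqrt{\epsilon}$-separated subset of $\SS^{d-1}$ with respect to Euclidean distance: greedily pick unit vectors one at a time, each at Euclidean distance at least $\sqrt{\epsilon}$ from every previously chosen point, until no more can be added. The whole argument then splits into a size lower bound (from a standard packing/covering volume argument) and a short Euclidean calculation for the distance-to-convex-hull property.

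First I would lower bound $|\cN|$ by the usual volume argument for $\epsilon$-nets. By maximality, $\cN$ is automatically a $\sqrt{\epsilon}$-covering of $\SS^{d-1}$: every unit vector lies within Euclidean distance $\sqrt{\epsilon}$ of some point of $\cN$. Each such ball cuts out a spherical cap of normalised surface measure $\Theta(\epsilon^{(d-1)/2})$ for small $\epsilon$, so $|\cN|\cdot\Theta(\epsilon^{(d-1)/2}) \ge 1$, giving $|\cN| = \Omega(\epsilon^{-(d-1)/2})$. This matches the stated $\Omega(1/\epsilon^{d/2})$ up to absorbing a $\sqrt{\epsilon}$ factor into the hidden constant.

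Next I would convert pairwise Euclidean separation into an inner-product upper bound that survives convex combinations. For any two distinct $x,p\in\cN$, the identity $\|x-p\|_2^2 = 2-2\langle x,p\rangle$ combined with $\|x-p\|_2\ge\sqrt{\epsilon}$ gives $\langle x,p\rangle \le 1-\epsilon/2$. For an arbitrary convex combination $y = \sum_{p\in\cN\setminus\{x\}} \lambda_p\, p$, linearity in $y$ yields $\langle x,y\rangle \le 1-\epsilon/2$. Then, decomposing $y = \langle x,y\rangle\, x + y^{\perp}$ with $y^{\perp}\perp x$,
\[
\|x-y\|_2^2 \;=\; \bigl(1-\langle x,y\rangle\bigr)^2 + \|y^{\perp}\|_2^2 \;\ge\; \bigl(1-\langle x,y\rangle\bigr)^2 \;\ge\; (\epsilon/2)^2,
\]
so $\|x-y\|_2 \ge \epsilon/2$ uniformly over $y\in\conv(\cN\setminus\{x\})$, as required.

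The only non-routine point is matching the exponent $d/2$ in the size claim: the clean greedy packing only gives $\Omega(\epsilon^{-(d-1)/2})$ on the $(d-1)$-dimensional sphere, and the stated $\Omega(1/\epsilon^{d/2})$ is this bound with a benign $\sqrt{\epsilon}$ factor folded into the $\Omega(\cdot)$. No probabilistic construction or sharper geometric input appears to be needed; the subsequent use of Lemma~\ref{lem:geo} in the proof of Theorem~\ref{thm:lb_margin} only requires $|\cN|$ to be exponential in $d$ with separation $\epsilon/2$ from the convex hull of the remaining points, and this is exactly what the above construction delivers.
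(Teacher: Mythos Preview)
Your proposal is correct and follows essentially the same route as the paper: both take $\cN$ to be a $\sqrt{\epsilon}$-packing of $\SS^{d-1}$, invoke the standard volume bound for its size, and derive the $\epsilon/2$ distance to $\conv(\cN\setminus\{x\})$ from the identity $\|x-p\|_2^2 = 2 - 2\langle x,p\rangle$ for unit vectors. The only cosmetic difference is in the last step---the paper names the separating hyperplane orthogonal to $x$ and runs a Pythagorean computation on the triangle $o,x,z$ to obtain $|bx|=|xz|^2/2\ge\epsilon/2$, whereas your inner-product formulation $\|x-y\|_2\ge 1-\langle x,y\rangle$ reaches the same bound more directly; your caveat about the packing exponent $(d-1)/2$ versus the stated $d/2$ is also accurate, and indeed only exponential-in-$d$ size is used in the application to Theorem~\ref{thm:lb_margin}.
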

Now we are ready to prove Theorem~\ref{thm:lb_margin}. In the proof we assume $H = d$, since otherwise we can take $H$ and $d$ to be $\min\{H, d\}$ by decreasing the planning horizon $H$ or adding dummy dimensions to the feature extractor $\phi$.

\begin{proof}[Proof of Theorem~\ref{thm:lb_margin}]
	We define a set of $2^{H - 1}$ deterministic MDPs.
	The transitions of these hard instances are exactly the same as those in Section~\ref{sec:lb_value}.
	The main difference is in the definition of the feature map $\phi(\cdot, \cdot)$ and the reward function.
	Again in the hard instances, $r(s, a) = 0$ for all $s$ in the first $H - 2$ levels. 
	Using the terminology in Section~\ref{sec:lb_value}, we have $\reward(s) = 0$ for all states in the first $H - 1$ levels.
	Now we define $\reward(s)$ for states $s$ in level $H - 1$. We do so by recursively defining the optimal value function $V^*(\cdot)$.
	The initial state $s_0$ in level $0$ satisfies $V^*(s_0) = 1/2$.
	For each state $s_i$ in the first $H - 2$ levels, we have $V^*(s_{2i + 1}) = V^*(s_i)$ and $V^*(s_{2 i + 2}) = V^*(s_i) - 1/2H$.
	For each state $s_i$ in the level $h = H - 2$, we have $\reward(s_{2i + 1}) = V^*(s_i)$ and $\reward(s_{2 i + 2}) = V^*(s_i) - 1/2H$.
	This implies that $\gapmin = 1/2H$.
	In fact, this implies a stronger property that each state has a unique optimal action. 
	See Figure~\ref{fig:tree_policy} for an example with $H = 3$.
	
	\begin{figure}
		\centering
		\includegraphics{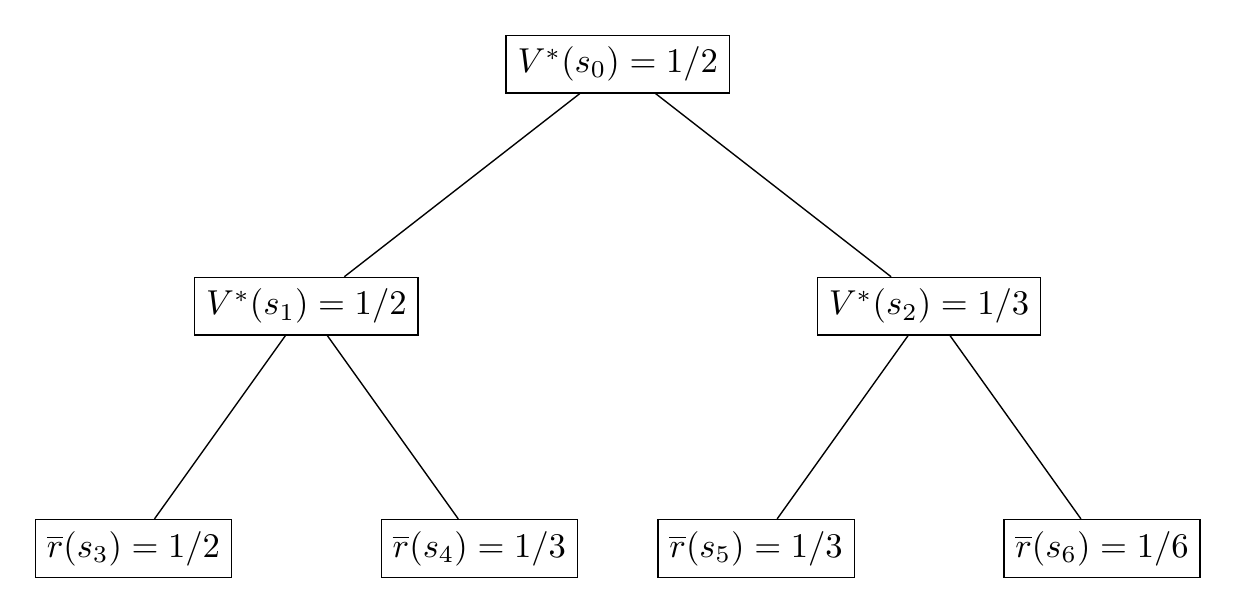}
		\caption{An example with $H = 3$.}
		\label{fig:tree_policy}
	\end{figure}

	To define $2^{H - 1}$ different MDPs, for each state $s$ in level $H - 1$ of the MDP defined above, we define a new MDP by changing $\reward(s)$ from its original value to $1$. 
	This also affects the definition of the optimal $V$ function for states in the first $H - 1$ levels.
	In particular, for each level $i \in \{0, 1, 2, \ldots, H - 2\}$, we have changed the $V$ value of a unique state in level $i$ from its original value (at most $1/ 2$) to $1$.
	By doing so we have defined $2^{H - 1}$ different MDPs.
	See Figure~\ref{fig:tree_policy_change} for an example with $H = 3$.
	
	\begin{figure}
		\centering
		\includegraphics{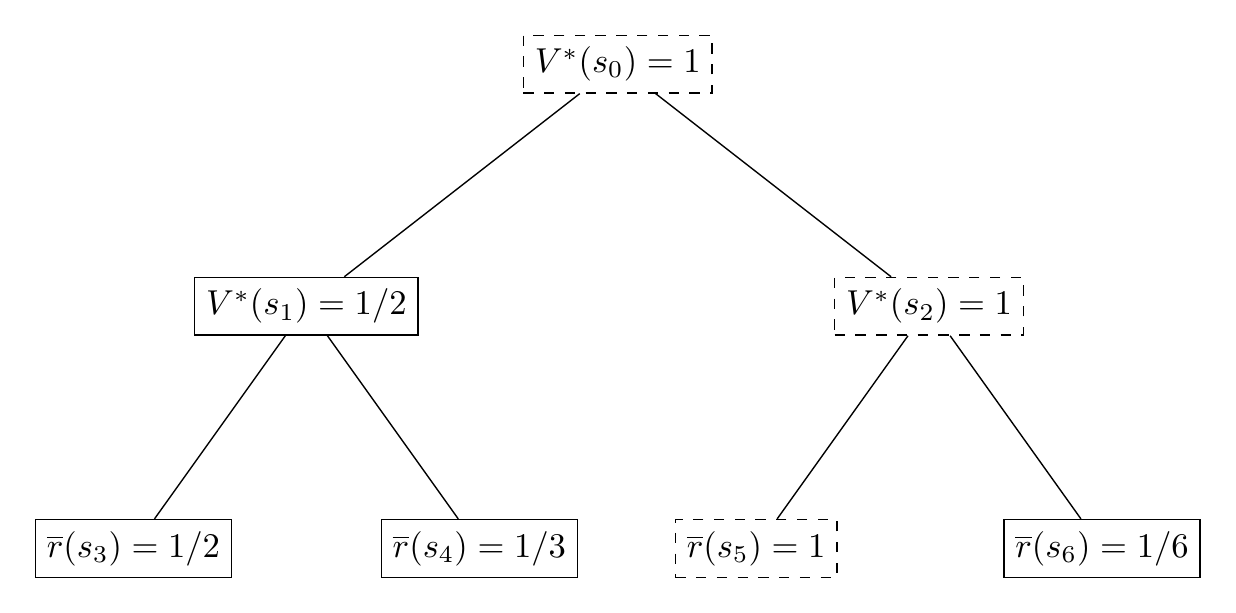}
		\caption{An example with $H = 3$. Here we define a new MDP by changing $\reward(s_5)$ from its original value $1/3$ to $1$. This also affects the value of $V(s_2)$ and $V(s_0)$.}
		\label{fig:tree_policy_change}
	\end{figure}	
	
	Now we define the feature function $\phi(\cdot, \cdot)$.
	We invoke Lemma~\ref{lem:geo} with $\epsilon = 8 \triangle$ and $d = H / 2 - 1$.
	Since $\triangle$ is sufficiently small, we have $|\mathcal{N}| \ge 2^H$.
	We use $\mathcal{P} = \{p_0, p_2, \ldots, p_{2^H - 1}\} \subset \mathbb{R}^{H / 2 - 1}$ to denote an arbitrary subset of $\mathcal{N}$ with cardinality $2^H$.
	By Lemma~\ref{lem:geo}, for any $p \in \mathcal{P}$, the distance between $p$ and the convex hull of $\mathcal{P} \setminus \{p\}$ is at least $4\triangle$.
	Thus, there exists a hyperplane $L$ which separates $p$ and $\mathcal{P} \setminus \{p\}$, and for all points $q \in \mathcal{P}$, the distance between $q$ and $L$ is at least $2 \triangle$.
	Equivalently, for each point $p \in \mathcal{P}$, there exists $n_p \in \mathbb{R}^{H / 2 - 1}$ and $o_p \in \mathbb{R}$ such that $\|n_p\|_2 = 1$, $|o_p| \le 1$ and the linear function $f_p(q) = \langle q, n_p \rangle + o_p$ satisfies $f_p(p) \ge 2 \triangle$ and $f_p(q) \le - 2 \triangle$ for all $q \in \mathcal{P} \setminus \{p\}$.
	Given the set $\mathcal{P} = \{p_0, p_2, \ldots, p_{2^H - 1}\} \subset \mathbb{R}^{H / 2 - 1}$, 
	we construct a new set $\overline{\mathcal{P}}= \{\overline{p}_0, \overline{p}_2, \ldots, \overline{p}_{2^H - 1}\} \subset \mathbb{R}^{H / 2}$, where $\overline{p}_i = [p_i; 1] \in \mathbb{R}^{H / 2}$. Thus $\|\overline{p}_i\|_2 = \sqrt{2}$ for all $\overline{p}_i \in \overline{\mathcal{P}}$.
	Clearly, for each $\overline{p} \in \overline{\mathcal{P}}$, 
	there exists a vector $\omega_{\overline{p}} \in \mathbb{R}^{H / 2}$ 
	such that  $\langle \omega_{\overline{p}}, \overline{p} \rangle \ge 2 \triangle$ 
	and $\langle \omega_{\overline{p}}, \overline{q} \rangle \le -2 \triangle$
	for all $\overline{q} \in \overline{\mathcal{P}} \setminus \{\overline{p}\}$.
	It is also clear that $\|\omega_{\overline{p}}\|_2 \le \sqrt{2}$.
	We take $\phi(s_i, a_1) = [0; \overline{p}_i] \in \mathbb{R}^H$ and $\phi(s_i, a_2) = [\overline{p}_i; 0] \in \mathbb{R}^H$.

	We now show that all the $2^{H - 1}$ MDPs constructed above satisfy the linear policy assumption. 
	Namely, we show that for any state $s$ in level $H - 1$, after changing $\reward(s)$ to be 1, the resulting MDP satisfies the linear policy assumption. 
	As in Section~\ref{sec:lb_value}, for each level $h \in [H]$, there is a unique state $s_h$ in level $h$ and action $a_h \in \{a_1, a_2\}$, such that $Q^*(s_h, a_h) = 1$. For all other $(s, a)$ pairs such that $s \neq s_h$ or $a \neq a_h$, it is satisfied that $Q^*(s, a) = 0$. 
	For each level $h$, if $a_h = a_1$, then we take $(\theta_h)_{H / 2} = 1$ and $(\theta_h)_{H} = -1$, and all other entries in $\theta_h$ are zeros. 
	If $a_h = a_2$, we use $\overline{p}$ to denote the vector formed by the first $H / 2$ coordinates of $\phi(s_h, a_2)$.
	By construction, we have $\overline{p} \in \overline{\mathcal{P}}$.
	We take $\theta_h = [\omega_{\overline{p}}; 0]$ in this case.
	In any case, we have $\|\theta_h\|_2 \le \sqrt{2}$.
	Now for each level $h$, if $a_h = a_1$, then for all states $s$ in level $h$, we have $\pi^*(s) = a_1$.
	In this case, $\langle \phi(s, a_1), \theta_h \rangle = 1$ and $\langle \phi(s, a_2), \theta_h \rangle = -1$ for all states in level $h$, and thus Assumption~\ref{asmp:linear_policy_margin} is satisfied.
	If $a_h = a_2$, then $\pi^*(s_h) = a_2$ and $\pi^*(s) = a_1$ for all states $s \neq s_h$ in level $h$.
	By construction, we have $\langle \theta_h, \phi(s, a_1) \rangle = 0$ for all states $s$ in level $h$, since $\theta_h$ and $\phi(s, a_1)$ do not have a common non-zero entry.
	We also have $\langle \theta_h, \phi(s_h, a_2) \rangle \ge 2 \triangle$ and $\langle \theta_h, \phi(s, a_2) \rangle \le -2 \triangle $ for all states $s \neq s_h$ in level $h$.
	Finally, we normalize all $\theta_h$ and $\phi(s, a)$ so that they all have unit norm. Since $\|\phi(s, a)\|_2 = \sqrt{2}$ for all $(s, a)$ pairs before normalization, Assumption~\ref{asmp:linear_policy_margin} is still satisfied after normalization. 
	
	Finally, we prove any algorithm that solves these MDP instances and succeeds with probability at least $0.9$ needs to sample at least $\Omega(2^H)$ trajectories. We do so by providing a reduction from $\idxn[2^{H - 1}]$ to solving MDPs.
	Suppose we have an algorithm for solving these MDPs, we show that such an algorithm can be transformed to solve $\idxn[2^{H - 1}]$.
	For a specific choice of $i^*$ in $\idxn[2^{H - 1}]$, there is a corresponding MDP instance with 
	\[
	\reward(s) = 
	\begin{cases}
	1& \text{if $s = s_{i^* + 2^{H - 1} - 1}$} \\
	\text{the original (recursively defined) value} & \text{otherwise}
	\end{cases}.
	\]
	Notice that for all MDPs that we are considering, the transition and features are always the same. 
	Thus, the only thing that the learner needs to learn by interacting with the environment is the reward value. 
	Since the reward value is non-zero only for states in level $H - 1$, each time the algorithm for solving MDP samples a trajectory that ends at state $s_i$ where $s_i$ is a state in level $H - 1$, we query whether $i^* = i - 2^{H - 1} + 1$ or not in $\idxn[2^{H - 1}]$, and return reward value 1 if $i^* = i - 2^{H - 1} + 1$ and it original reward value otherwise. 
	If the algorithm is guaranteed to return a $1/4$-optimal policy, then it must be able to find $i^*$.

\end{proof}

\section{Technical Proofs}
\label{sec:proof}
\subsection{Proof of Theorem~\ref{thm:indq}}\label{proof:indq}
\begin{proof}
	The proof is a straightforward application of Yao's minimax principle~\cite{yao1977probabilistic}. We provide the full proof for completeness. 
	
	Consider an input distribution where $i^*$ is drawn uniformly at random from $[n]$. 
	Suppose there is a $0.1$-correct algorithm for $\idxn$ with worst-case query complexity $T$ such that $T < 0.9n$.
	By averaging, there is a deterministic algorithm $\cA'$ with worst-case query complexity $T$, such that
	\[
	\Pr_{i \sim [n]}[\text{$\cA'$ correctly outputs $i$ when $i^* = i$}] \ge 0.9.
	\]
	We may assume that the sequence of queries made by $\cA'$ is fixed.
	This is because (i) $\cA'$ is deterministic and (ii) before $\cA'$ correctly guesses $i^*$, all responses that $\cA'$ receives are the same (i.e., all guesses are incorrect).
	We use $S = \{s_1, s_2, \ldots, s_m\}$ to denote the sequence of queries made by $\cA'$. Notice that $m$ is the worst-case query complexity of $\cA'$.
	Suppose $m < 0.9n$, there exist $0.1n$ distinct $i \in [n]$ such that $\cA'$ will never guess $i$, and will be incorrect if $i^*$ equals $i$, which implies
	\[
	\Pr_{i \sim [n]}[\text{$\cA'$ correctly outputs $i$ when $i^* = i$}] < 0.9.
	\]
\end{proof}

\subsection{Proof of Lemma~\ref{lem:jl}}\label{proof:jl}
We need the following tail inequality for random unit vectors, which will be useful for the proof of Lemma~\ref{lem:jl}.
\begin{lem}[Lemma 2.2 in~\cite{dasgupta2003elementary}]\label{lem:rand_unit}
For a random unit vector $u$ in $\mathbb{R}^d$ and $\beta > 1$, we have
\[
\Pr\left[ u_1^2 \ge \beta / d \right] \le \exp((1 + \ln \beta - \beta) / 2).
\]
In particular, when $\beta \ge 6$,we have
\[
\Pr\left[ u_1^2 > \beta / d \right] \le \exp(-\beta / 4).
\]
\end{lem}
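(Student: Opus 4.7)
The plan is to realize a uniformly random unit vector as the normalization of a standard Gaussian and then apply a standard Chernoff/MGF argument on chi-squared variables. Write $u = X/\|X\|$ where $X=(X_1,\ldots,X_d)$ has i.i.d.\ $N(0,1)$ coordinates; then $u_1^2 = X_1^2 / S$ with $S = \sum_{i=1}^d X_i^2$. The event $\{u_1^2 \ge \beta/d\}$ is the same as $\{d\,X_1^2 \ge \beta\,S\}$, equivalently $\{(d-\beta)X_1^2 - \beta\sum_{i=2}^d X_i^2 \ge 0\}$. (If $\beta \ge d$ the event has probability $0$ and the bound is trivial, so assume $1 < \beta < d$.)

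The key step is a Chernoff bound with parameter $t>0$:
\[
\Pr\bigl[u_1^2 \ge \beta/d\bigr] \;\le\; \mathbb{E}\bigl[\exp\bigl(t(d-\beta)X_1^2 - t\beta\textstyle\sum_{i=2}^d X_i^2\bigr)\bigr].
\]
Using the $\chi^2_1$ moment generating function $\mathbb{E}[e^{sZ^2}] = (1-2s)^{-1/2}$ (valid for $s<1/2$) and independence of the coordinates, the right-hand side factors as $(1-2t(d-\beta))^{-1/2}(1+2t\beta)^{-(d-1)/2}$, valid whenever $2t(d-\beta)<1$. I would then choose $t = (\beta-1)/\bigl(2\beta(d-\beta)\bigr)$, which makes $1-2t(d-\beta)=1/\beta$ and $1+2t\beta=(d-1)/(d-\beta)$; substituting gives the bound
\[
\sqrt{\beta}\cdot\Bigl(\tfrac{d-\beta}{d-1}\Bigr)^{(d-1)/2}.
\]

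To finish the first inequality, I apply $1-x \le e^{-x}$ to $(d-\beta)/(d-1) = 1 - (\beta-1)/(d-1)$, which yields $((d-\beta)/(d-1))^{(d-1)/2} \le \exp(-(\beta-1)/2)$. Combined with $\sqrt{\beta} = \exp((\ln\beta)/2)$, the product is $\exp((1+\ln\beta-\beta)/2)$, matching the stated bound. For the ``in particular'' statement, the task reduces to verifying $1+\ln\beta-\beta \le -\beta/2$ for $\beta\ge 6$, i.e.\ $\beta/2 - \ln\beta - 1 \ge 0$. At $\beta=6$ this quantity equals $3 - \ln 6 - 1 \approx 0.208 > 0$, and its derivative $1/2 - 1/\beta$ is positive for $\beta>2$, so it is increasing and remains nonnegative for all $\beta\ge 6$.

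The main obstacle is not conceptual difficulty but choosing the Chernoff parameter $t$ correctly, since a poor choice will still give an exponential bound but not the sharp exponent $(1+\ln\beta-\beta)/2$. Once the clean choice $t=(\beta-1)/(2\beta(d-\beta))$ is made, the algebra collapses to the claimed form and the dimension $d$ cancels out entirely, which is what makes the bound uniform in $d$. The feasibility of this $t$ (i.e.\ $2t(d-\beta)<1$) is automatic since $\beta>1$ yields $2t(d-\beta) = 1-1/\beta < 1$.
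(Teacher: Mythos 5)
Your proof is correct. The paper does not prove this lemma itself---it imports it directly as Lemma~2.2 of \cite{dasgupta2003elementary}---and your argument (realizing $u$ as a normalized Gaussian, reducing the event to $(d-\beta)X_1^2 - \beta\sum_{i=2}^d X_i^2 \ge 0$, and applying a Chernoff bound with the choice $t = (\beta-1)/(2\beta(d-\beta))$) is precisely the standard proof from that reference, carried out correctly, including the edge case $\beta \ge d$ and the elementary verification that $\beta/2 - \ln\beta - 1 \ge 0$ for $\beta \ge 6$.
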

\begin{proof}[Proof of Lemma~\ref{lem:jl}]
Let $\mathcal{Q} = \{q_1, q_2, \ldots, q_n\}$ be a set of $n$ independent random unit vectors in $\mathbb{R}^d$ with $d = \lceil 8 \ln n / \varepsilon^2 \rceil$. 
We will prove that with probability at least $1/2$, $\mathcal{Q}$ satisfies the two desired properties as stated in Lemma~\ref{lem:jl}.
This implies the existence of such set $\mathcal{P}$.

It is clear that $\|q_i\|_2 = 1$ for all $i \in [n]$, since each $q_i$ is drawn from the unit sphere. 
We now prove that for any $i, j \in [n]$ with $i \neq j$, with probability at least $1 - \frac{1}{n^2}$, we have $\left| \langle q_i, q_j\rangle \right| \le \varepsilon$.
Notice that this is sufficient to prove the lemma, since by a union bound over all the $\binom{n}{2} = n(n - 1)/ 2$ possible pairs of $(i, j)$, this implies that $\mathcal{Q}$ satisfies the two desired properties with probability at least $1/2$.

Now, we prove that for two independent random unit vectors $u$ and $v$ in $\mathbb{R}^d$ with $d = \lceil 8 \ln n / \varepsilon^2 \rceil$, with probability at least $1 - \frac{1}{n^2}$, $\left| \langle u, v \rangle \right| \le \varepsilon$.
By rotational invariance, we assume that $v$ is a standard basis vector. I.e., we assume $v_1 = 1$ and $v_i = 0$ for all $1 < i \le d$.
Notice that now $\langle u, v \rangle$ is the magnitude of the first coordinate of $u$.
We finish the proof by invoking Lemma~\ref{lem:rand_unit} and taking $\beta = 8 \ln n > 6$.
\end{proof}

\subsection{Proof of Lemma~\ref{lem:geo}}\label{sec:proof_geo}
\begin{proof}[Proof of Lemma~\ref{lem:geo}]
	Consider a $\sqrt{\epsilon}$-packing $\cN$ with size $\Omega(1/\epsilon^{d/2})$ on the $d$-dimensional unit sphere $\SS^{d-1}$ (for the existence of such a packing, see, e.g., \cite{lorentz1966metric}).
	Let $o$ be the origin.
	For two points $x, x'\in \RR^d$, we denote $|xx'|:=\|x-x'\|_2$ the length of the line segment between $x,x'$.
	Note that every two points $x,x'\in \cN$ satisfy $|xx'|\ge \sqrt{\epsilon}$.
	
	To prove the lemma, it suffices to show that $\cN$ satisfies the property  $\eqref{eqn:dist}$.
	Consider a point $x\in \cN$, let $A$ be a hyperplane that is perpendicular to $x$ (notice that $x$ is a also a vector) and separates $x$ and every other points in $\cN$.
	We let the distance between $x$ and $A$ be the largest possible, i.e., $A$ contains a point in $\cN\backslash\{x\}$.
	Since $x$ is on the unit sphere and $\cN$ is a $\sqrt{\epsilon}$-packing, we have that $x$ is at least $\sqrt{\epsilon}$ away from every point on the  spherical cap not containing $x$, defined by the cutting plane $A$.
	More formally, let $b$ be the intersection point of the line segment $ox$ and $A$.
	Then
	\[
	\forall y \in \big\{y'\in \SS^{d-s}: \langle b, y'\rangle\le \|b\|^2_2\big\}:\quad \|x - y\|_2\ge \sqrt{\epsilon}.
	\]
	Indeed, by symmetry, $\forall y \in \{y'\in \SS^{d-1}: \langle b, y'\rangle\le \|b\|^2_2\big\}$,
	\[
	\|x-y\|_2\ge \|x-z\|_2 \ge \sqrt{\epsilon}.
	\]
	where $z \in \cN\cap A$.
	Notice that the distance between $x$ and the convex hull of $\cN\backslash\{x\}$ is lower bounded by the distance between $x$ and $A$, which is given by $|bx|$.
	Consider the triangles defined by $x, z, o,b$.
	We have $bz\perp ox$ (note that $bz$ lies inside $A$).
	By Pythagorean theorem, we have
	\begin{align*}
	|bz|^2 + |bx|^2 &= |xz|^2;\\
	|bx|+ |bo|& = |xo| = 1;\\
	|bz|^2 + |bo|^2& = |oz|^2 = 1. 
	\end{align*}
	Solve the above three equations for $|bx|$, we have
	\[
	|bx| = |xz|^2/2 \ge \epsilon/2
	\]
	as desired.
\end{proof}

\section{Exact Linear $Q^*$ + Gap in \gm}
In this section we present and prove the following theorem.
\begin{thm}
\label{thm:ub_q_star_gap_gm}
Under Assumption~\ref{asmp:weak_gap}, Assumption~\ref{asmp:q_all_linear} and \gm~query model, the agent can find the optimal $\pi^*$ with $\poly\left(d,H,\frac{1}{\gapmin},\log\left(\frac{1}{\delta}\right)\right)$ queries with  probability $1-\delta$ for a given failure probability $\delta > 0$,
\end{thm}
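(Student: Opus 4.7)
The plan is to do backward induction on the level $h$ from $H-1$ down to $0$, producing estimates $\hat\theta_h$ so that the induced greedy rule $\hat\pi_h(s):=\arg\max_a \langle \hat\theta_h,\phi(s,a)\rangle$ coincides with $\pi^*$ on all of $\mathcal{S}_h$. The inductive hypothesis at step $h$ will be that $\hat\pi_{h'} \equiv \pi^*|_{\mathcal{S}_{h'}}$ for every $h'>h$. Combined with the \kt\ / \gm\ oracle, this means that for any chosen $(s,a)\in\mathcal{S}_h\times\mathcal{A}$ I can produce an unbiased Monte-Carlo estimate of $Q^*_h(s,a)$ by querying the reward, transitioning to $s'$, and rolling out $\hat\pi$ for $H-h-1$ further steps; because $\sum_h r_h\in[0,1]$ these estimates have variance at most $1$.

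At each level $h$, I would fit $\hat\theta_h$ by weighted least squares on samples drawn from a Kiefer-Wolfowitz (G-optimal) design $\mu$ supported on a core subset of $\{\phi(s,a):s\in\mathcal{S}_h,a\in\mathcal{A}\}$ of size $O(d^2)$, for which
\[
\max_{(s,a)\in\mathcal{S}_h\times\mathcal{A}}\phi(s,a)^{\top}\Sigma(\mu)^{-1}\phi(s,a)\le d .
\]
Assumption~\ref{asmp:q_star_linear} with $\delta=0$ guarantees the regression is realizable. Combining the G-optimal leverage bound with the standard sub-Gaussian bound $\|\hat\theta_h-\theta^*_h\|_{\Sigma_{n_h}}\le O(\sqrt{d\log(H/\delta)})$ yields the uniform prediction guarantee
\[
\max_{(s,a)\in\mathcal{S}_h\times\mathcal{A}}\bigl|\langle\hat\theta_h,\phi(s,a)\rangle-Q^*_h(s,a)\bigr|\le c\,d\sqrt{\log(H/\delta)/n_h}
\]
with probability $1-\delta/H$. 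Choosing $n_h=\Theta(d^2\log(H/\delta)/\gapmin^2)$ drives the right-hand side below $\gapmin/2$. Assumption~\ref{asmp:weak_gap} then forces $\hat\pi_h\equiv\pi^*$ on $\mathcal{S}_h$, closing the induction. The base case $h=H-1$ is identical, except that $Q^*_{H-1}(s,a)=\mathbb{E}[R(s,a)]$ is estimated by a single reward query per sample, so no rollout is needed. A union bound over the $H$ levels and accounting for rollouts of length $\le H$ gives total query complexity $\widetilde{O}(H^2 d^2\gapmin^{-2}\log(1/\delta))$, which is polynomial in the required parameters.

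The main delicate step is the least-squares concentration: one must pair the uniform leverage bound $\|\phi\|_{\Sigma(\mu)^{-1}}^2\le d$ (which holds for \emph{every} $(s,a)$, not just design points) with the $\sqrt{d\log(1/\delta)}$ bound on $\|\hat\theta_h-\theta^*_h\|_{\Sigma_{n_h}}$ to obtain a \emph{uniform} prediction error over $\mathcal{S}_h\times\mathcal{A}$ rather than merely pointwise. This is exactly where the $d^2$ (as opposed to $d$) in the sample complexity comes from, and it is the only place where some care is needed. A secondary, purely information-theoretic issue is the existence of the G-optimal design on the (possibly enormous) feature set $\{\phi(s,a):s\in\mathcal{S}_h,a\in\mathcal{A}\}$; Kiefer-Wolfowitz guarantees existence with support size $O(d^2)$, and since the theorem is a sample-complexity statement we may assume the design is supplied.
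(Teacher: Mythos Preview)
Your proposal is correct and follows the same backward-induction skeleton as the paper: at each level $h$, use the \gm\ oracle to query a small ``design'' subset of $\mathcal{S}_h\times\mathcal{A}$, roll out with the already-recovered $\pi^*$ on later levels to obtain unbiased Monte-Carlo estimates of $Q^*_h$, and then use the gap to certify that the greedy policy is exactly $\pi^*$.

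The one substantive difference is the choice of design. You use a Kiefer--Wolfowitz G-optimal design of support $O(d^2)$ together with least-squares concentration, which requires pairing the leverage bound $\|\phi\|_{\Sigma(\mu)^{-1}}^2\le d$ with $\|\hat\theta_h-\theta_h^*\|_{\Sigma_{n_h}}\lesssim\sqrt{d\log(H/\delta)}$ to get a uniform prediction bound. The paper instead uses a \emph{barycentric spanner} $\Lambda_h=\{\phi(s_h^1,a_h^1),\dots,\phi(s_h^d,a_h^d)\}$ of $\{\phi(s,a):s\in\mathcal{S}_h,a\in\mathcal{A}\}$: every $\phi(s,a)$ can be written as $\sum_i c_{s,a}^i\,\phi(s_h^i,a_h^i)$ with $|c_{s,a}^i|\le 1$, so one simply estimates each $Q^*(s_h^i,a_h^i)$ to accuracy $\gapmin/(2d)$ by Hoeffding and sets $\widehat Q(s,a)=\sum_i c_{s,a}^i\,\widehat Q(s_h^i,a_h^i)$, which immediately gives the uniform bound $|\widehat Q(s,a)-Q^*(s,a)|<\gapmin/2$ without any least-squares machinery. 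The barycentric-spanner route is more elementary (only $d$ design points, a triangle inequality, and Hoeffding), while your G-optimal route is the more standard linear-bandit argument; both yield the same $\poly(d,H,1/\gapmin,\log(1/\delta))$ sample complexity.
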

\begin{proof}[Proof of Theorem~\ref{thm:ub_q_star_gap_gm}]
We first describe the algorithm.
For each level, the agent first construct a barycentric spanner $\spanner_h \triangleq\left\{\phi(s_h^1,a_h^1),\ldots\phi(s_h^d,a_h^d)\right\} \subset\Phi_h \triangleq \left\{\phi\left(s,a\right)\right\}_{s \in \states_h, a \in \actions}$~\citep{awerbuch2008online}.
We have the property that any $\phi(s,a)$ with $s_h \in \states_h, a \in \actions$, we have $c_{s,a}^1,\ldots,c_{s,a}^d \in [-1,1]$ such that $\phi(s,a) = \sum_{i=1}^d c_{s,a}^i \phi(s_h^i,a_h^i)$.

The algorithm learns the optimal policy from $h=H-1,\ldots,0$.
At any level $h$, we assume the agent has learned the optimal policy $\pi_{h'}^*$ at level $h'=h+1,\ldots,H - 1$.

Now we present a procedure to show how to learn the optimal policy at level $h$.
At level $h$, the agent queries every vector $\phi(s_h^i,a_h^i)$ in $\Lambda_h$ for $\poly(d,\frac{1}{\gapmin},\log\left(\frac{H}{\delta}\right))$ times and uses $\pi_{h+1}^*,\ldots,\pi_{H}^*$ as the roll-out to get the on-the-go reward.
Note by the definition of $\pi^*$ and $Q^*$, the on-the-go reward is an unbiased sample of $Q^*(s_h^i,a_h^i)$.
We denote $\widehat{Q}(s_h^i,a_h^i)$ the average of these on-the-go rewards.
By Hoeffding inequality, it is easy to show with probability $1-\frac{\delta}{H}$, for all $i =1,\ldots,d$, $\abs{\widehat{Q}(s_h^i,a_h^i)-Q^*(s_h^i,a_h^i)} \le \poly\left(\frac{1}{d},\gapmin\right)$.
Now we define our estimated $Q^*$ at level $h$ as follow: for any $(s,a) \in \states_h \times \actions$, $\widehat{Q}\left(s,a\right) = \sum_{i=1}^d c_{s,a}^i \widehat{Q}(s_h^i,a_h^i)$.
By the boundedness property of $c_{s,a}$, we know  for any $(s,a) \in \states_h \times \actions$, $\widehat{Q}\left(s,a\right)-Q^*\left(s,a\right) < \frac{\gapmin}{2}$.
Note this implies the policy induced by $\widehat{Q}$ is the same as $\pi^*$.
Therefore by induction we finish the proof.

\end{proof}

\section{Linear $Q^\pi$ for all $\pi$ in \gm}
In this section we present and prove the following theorem.
\begin{thm}
	\label{thm:ub_all_pi_gm}
	Under Assumption~\ref{asmp:q_all_linear} with $\approxerr = 0$, in the \gm~query model, there is an algorithm that finds an $\epsilon$-optimal policy $\hat{\pi}$ using $\poly\left(d,H,\frac{1}{\epsilon}\right)$ trajectories with probability $0.99$.
\end{thm}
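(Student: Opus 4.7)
The plan is to run a backwards least-squares value iteration that exploits both the generative model and the \emph{exact} linearity of $Q^\pi$ for every policy. I would process levels $h = H-1, H-2, \ldots, 0$ while maintaining a deterministic policy $\hat\pi_{h+1},\ldots,\hat\pi_{H-1}$ produced by earlier iterations; call the composite roll-out policy $\hat\pi^{(h+1)}$. Define $\tilde Q_h(s,a) := Q_h^{\hat\pi^{(h+1)}}(s,a)$, which, by Assumption~\ref{asmp:q_all_linear} with $\approxerr = 0$, is \emph{exactly} linear in $\phi$: there is some $\theta^\star_h \in \mathbb{R}^d$ with $\tilde Q_h(s,a) = \langle \theta^\star_h,\phi(s,a)\rangle$. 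The goal at level $h$ is to recover $\theta^\star_h$ well enough to act greedily with respect to $\tilde Q_h$, and then slot the resulting $\hat\pi_h$ into the roll-out policy for the next iteration.

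At level $h$, I first compute (offline) a barycentric spanner $\spanner_h = \{\phi(s_h^i,a_h^i)\}_{i=1}^{d}$ of $\{\phi(s,a):s\in\states_h, a\in\actions\}$, as in the proof of Theorem~\ref{thm:ub_q_star_gap_gm}, so that every feature admits a decomposition $\phi(s,a)=\sum_{i=1}^d c^i_{s,a}\,\phi(s_h^i,a_h^i)$ with $|c^i_{s,a}|\le 1$. For each spanner point I use the \gm{} oracle to teleport to $s_h^i$, play $a_h^i$, and then roll out $\hat\pi^{(h+1)}$ to obtain an unbiased sample of $\tilde Q_h(s_h^i,a_h^i)\in[0,1]$. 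Averaging $N=\Theta\bigl(d^2H^2\epsilon^{-2}\log(dH)\bigr)$ such rollouts yields estimates $\hat y^i$ with $|\hat y^i - \tilde Q_h(s_h^i,a_h^i)|\le \epsilon/(4dH)$ simultaneously for all $i$ by Hoeffding and a union bound. Solving the $d\times d$ linear system $\langle\hat\theta_h,\phi(s_h^i,a_h^i)\rangle = \hat y^i$ and extending gives $\hat Q_h(s,a)=\sum_i c^i_{s,a}\hat y^i$, which satisfies $|\hat Q_h(s,a)-\tilde Q_h(s,a)|\le \epsilon/(4H)$ uniformly by the barycentric bound. Set $\hat\pi_h(s):=\argmax_a \hat Q_h(s,a)$.

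To bound the overall suboptimality, I would prove by reverse induction on $h$ that
\[
V^*_h(s) - V^{\hat\pi}_h(s) \le (H-h)\,\epsilon/H \quad\text{for every } s\in\states_h.
\]
The base case $h=H$ is trivial. For the inductive step, the identity $V^{\hat\pi}_h(s)=\tilde Q_h(s,\hat\pi_h(s))$ (which holds because from level $h+1$ onward $\hat\pi$ coincides with $\hat\pi^{(h+1)}$), the greedy rule, and the uniform estimation bound give $\tilde Q_h(s,\hat\pi_h(s))\ge \tilde Q_h(s,\pi^*_h(s))-\epsilon/(2H)$; then comparing $\tilde Q_h(s,\pi^*_h(s))$ to $Q^*_h(s,\pi^*_h(s))$ reduces to the inductive bound at $h+1$ since the two quantities share the same immediate reward and next-state distribution. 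Taking $h=0$ yields $V^{\hat\pi}_0(s_0)\ge V^*_0(s_0)-\epsilon$, while the total sample count is $H\cdot d\cdot N=\poly(d,H,1/\epsilon)$, as required.

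The main obstacle I anticipate is avoiding a subtle circularity around ``which policy'' Assumption~\ref{asmp:q_all_linear} is invoked on: the roll-out policy $\hat\pi^{(h+1)}$ is data-dependent, yet I need its $Q$-function to be exactly linear so that the $d$ spanner samples determine $\theta^\star_h$ without residual error. This is legitimate because the assumption quantifies over \emph{all} policies, but the argument must condition on the realized $\hat\pi^{(h+1)}$ before invoking linearity and the Hoeffding bound, and then take a union bound over the $H$ levels. A secondary nuisance is that an exact barycentric spanner may be computationally awkward to obtain; a $2$-approximate spanner (or $G$-optimal design) suffices and only inflates $N$ by a constant, keeping the final complexity polynomial.
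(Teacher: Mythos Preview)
Your proposal is correct and follows essentially the same approach as the paper: backwards induction over levels using a barycentric spanner at each level, teleporting to spanner points via the generative model, rolling out the already-learned $\hat\pi^{(h+1)}$ to obtain unbiased samples of $Q_h^{\hat\pi^{(h+1)}}$ (which is exactly linear by Assumption~\ref{asmp:q_all_linear} with $\approxerr=0$), and then a performance-difference telescoping to accumulate at most $\epsilon/H$ error per level. The only cosmetic difference is that the paper writes the final bound as a telescoping sum introducing the level-$h$ greedy policy $\tilde\pi_h$ with respect to the fixed roll-out, whereas you phrase it as a direct backward induction on $V^*_h - V^{\hat\pi}_h$; these are equivalent, and your treatment of the data-dependent roll-out (condition, then union bound) is actually more explicit than the paper's.
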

\begin{proof}[Proof of Theorem~\ref{thm:ub_all_pi_gm}]
	The algorithm is the same as the one in Theorem~\ref{thm:ub_q_star_gap_gm}
	We only need to change the analysis.
	Suppose we are learning at level $h$ and we have learned policies $\pi_{h+1},\ldots,\pi_{H - 1}$ for level $h+1,h+2,\ldots,H - 1$, respectively.
	Because we use the roll-out policy $\pi_{h+1}\circ \cdots\circ\pi_{H - 1}$, by Assumption~\ref{asmp:q_all_linear} and the property of barycentric spanner, using the same argument in the proof of Theorem~\ref{thm:ub_q_star_gap_gm}, we know with probability $1-0.01/H$, we can learn a policy $\pi_{h}$ with $\poly\left(d,H,\frac{1}{\epsilon}\right)$ samples such that for any $s \in \states_h$, we know $\pi_{h}$ is only sub-optimal by $\frac{\epsilon}{H}$ from the $\tilde{\pi}_h$ where $\tilde{\pi}_h$ is the optimal policy at level $h$ such that $\pi_{h+1}\circ\cdots\circ\pi_{H - 1}$ is the fixed roll-out policy.
	
	Now we can bound the sub-optimality of $\hat{\pi} \triangleq \pi_0\circ\cdots\circ\pi_{H - 1}$:\begin{align*}
		&V^{\pi_0\circ\pi_1 \circ \cdots \circ \pi_{H - 1}}\left(s_1\right) - V^{\pi_0^*\circ\pi_1^* \circ \cdots \circ \pi_{H - 1}^{*}}\left(s_1\right)\\
		=~&V^{\pi_0\circ \pi_1 \circ \cdots \circ \pi_{H - 1}}\left(s_1\right) - V^{\tilde{\pi}_0\circ \pi_1 \circ \cdots \circ \pi_{H - 1}}\left(s_1\right)\\
		+& V^{\tilde{\pi}_0\circ \pi_1 \circ \cdots \circ \pi_{H - 1}}\left(s_1\right) - V^{\pi_0^*\circ \pi_1\circ \cdots\circ \pi_{H - 1}}(s_1) \\
		+& V^{\pi_0^*\circ \pi_1\circ \cdots\circ \pi_{H - 1}}(s_1) - V^{\pi_0^*\circ\pi_1^* \circ \cdots \circ \pi_{H - 1}^{*}}\left(s_1\right).
	\end{align*}
	The first term is at least $-\frac{\epsilon}{H}$ by our estimation bound,
	The second term is positive by definition of $\tilde{\pi}_0$.
	We can just recursively apply this argument to obtain 
	\begin{align*}
		&V^{\pi_0\circ\pi_1 \circ \cdots \circ \pi_{H - 1}}\left(s_1\right) - V^{\pi_0^*\circ\pi_1^* \circ \cdots \circ \pi_{H - 1}^{*}}\left(s_1\right) \\
		\ge   &V^{\pi_0^*\circ \pi_1\circ \cdots\circ \pi_{H - 1}}(s_1) - V^{\pi_0^*\circ \pi_1^*\circ \cdots \circ \pi_{H - 1}^{*}}\left(s_1\right) - \frac{\epsilon}{H}.\\
		\ge &V^{\pi_0^*\circ \pi_1^* \circ  \cdots\circ \pi_{H - 1}}(s_1) - V^{\pi_0^*\circ \pi_1^*\circ \cdots \circ \pi_{H - 1}^{*}}\left(s_1\right) - \frac{2\epsilon}{H}.\\
		\ge &\ldots \\
		\ge &-\epsilon.
	\end{align*}
\end{proof}



\end{document}